\newtheorem{thm}{Theorem}
\newtheorem{defn}{Definition}
\newtheorem{lem}{Lemma}
\newtheorem{rem}{Remark}
\newtheorem{example}{Example}
\begin{document}
\title{Spurious Vanishing Problem in Approximate Vanishing Ideal}
\author{Hiroshi Kera\footnote{Department of Information and Communication Engineering,
Graduate School of Information Science and Technology,
The University of Tokyo. Corresponding author: Hiroshi Kera (e-mail: kera@biom.t.u-tokyo.ac.jp).} \ and Yoshihiko Hasegawa$^*$}
\date{}
% \tfootnote{This work was supported by JSPS KAKENHI Grant Number 17J07510. }

\maketitle

\begin{abstract}
Approximate vanishing ideal is a concept from computer algebra that studies the algebraic varieties behind perturbed data points. 
To capture the nonlinear structure of perturbed points, the introduction of approximation to exact vanishing ideals plays a critical role. However, such an approximation also gives rise to a theoretical problem---the spurious vanishing problem---in the basis construction of approximate vanishing ideals; namely, obtained basis polynomials can be approximately vanishing simply because of the small coefficients. 
In this paper, we propose a first general method that enables various basis construction algorithms to overcome the spurious vanishing problem. In particular, we integrate coefficient normalization with polynomial-based basis constructions, which do not need the proper ordering of monomials to process for basis constructions. We further propose a method that takes advantage of the iterative nature of basis construction so that computationally costly operations for coefficient normalization can be circumvented. Moreover, a coefficient truncation method is proposed for further accelerations. From the experiments, it can be shown that the proposed method overcomes the spurious vanishing problem, resulting in shorter feature vectors while sustaining comparable or even lower classification error.
\end{abstract}

\section{Introduction}
Discovering nonlinear structure behind data is a common task across various fields, such as machine learning, computer vision, and systems biology. An emerging concept from computer algebra for this task is the approximate vanishing ideal~\cite{heldt2009approximate,robbiano2010approximate}, which is defined as a set of polynomials that almost take a zero value, i.e., approximately vanish, for any point in data. Roughly, for a set of $n$-dimensional points $X\subset\mathbb{R}^n$,
\begin{align*}
\mathcal{I}_{\mathrm{app}}(X) & =\left\{ g\in\mathcal{P}_n\mid\forall\boldsymbol{x}\in X,g(\boldsymbol{x})\approx0\right\} ,
\end{align*}
where $\mathcal{P}_n$ is the set of all $n$-variate polynomials over the real numbers. 
An approximate vanishing polynomial $g\in\mathcal{I}_{\mathrm{app}}(X)$ holds $X$ as its approximate roots, which implies $g$ reflects the nonlinear structure underlying $X$. In particular, computing the basis set of approximate vanishing ideal has been attracting a lot of attention~\cite{heldt2009approximate,livni2013vanishing,limbeck2014computation,iraji2017principal}; such basis vanishing polynomials describe a system that has $X$ as approximate common roots, implying the nonlinear structure of data $X$ is captured in the system.
Various basis construction algorithms have been proposed and exploited in applications. For instance, nonlinear feature vectors of data are constructed for classifications
\cite{livni2013vanishing, shao2016nonlinear, hou2016discriminative};
independent signals are estimated for blind source separation tasks
\cite{kiraly2012regression, wang2018nonlinear}; nonlinear dynamical systems are reconstructed
from noisy observations~\cite{kera2016noise}; and so forth~\cite{torrente2009application,kera2016vanishing}. 

The essential ingredient for approximate vanishing ideal is the error tolerance $\epsilon$. 
A polynomial $g$ is approximately vanishing for a point $\boldsymbol{x}$ if $|g(\boldsymbol{x})|\le\epsilon$. An exact vanishing ideal, where $\epsilon=0$, can result in a corrupted model that overfits the noisy data and is far from the actual data structure. By setting proper $\epsilon > 0$, the basis set of approximate vanishing polynomials is expected to be a polynomial system that reflects the informative structure of noisy data.
However, this approximation gives rise to a new theoretical question: how can we properly evaluate the approximate vanishing of polynomials? Approximate vanishing polynomials change the extent of vanishing by simply rescaling. For example, a nonvanishing polynomial, $|g(\boldsymbol{x})|=2\epsilon\ge\epsilon$, can be easily converted into an approximate vanishing polynomial by rescaling its coefficients by $1/2$, i.e., $|(g/2)(\boldsymbol{x})|=\epsilon$. 
In other words, approximate vanishing can be achieved by small coefficients regardless of the roots of polynomials; such \emph{spurious} approximate vanishing polynomials do not hold any useful structure of data.
The converse is also true; polynomials that well describe data can be rejected as nonvanishing polynomials because of their large coefficients. Such polynomials are referred to as spurious nonvanishing polynomials.
% This spurious vanishing problem has been lying behind recent basis construction algorithms of approximate vanishing ideal without being noticed.

In this paper, we address the aforementioned problem, spurious vanishing problem\footnote{Hereinafter, the spurious vanishing problem refers to both the problems on spurious vanishing polynomials and spurious nonvanishing polynomials.}, in basis construction algorithms of approximate vanishing ideal. In particular, we focus on polynomial-based algorithms, which are most commonly used basis construction algorithms of approximate vanishing ideals in applications other than computer algebra. To avoid the spurious vanishing problem, polynomials need to be normalized on some scale. We propose a general normalization scheme of polynomials that can work with various polynomial-based algorithms and normalization scales. We discuss the properties required by normalization scales and formulate our polynomial generation as a constraint optimization problem in the form of the generalized eigenvalue problem. The optimality, stability, and validity are guaranteed by rigorous theoretical analysis.
As a particular normalization, we consider coefficient normalization, which constraints the coefficient norms of a polynomial to be unity\footnote{The coefficient norm of a polynomial is the root square sum of the coefficients of monomials in the polynomial.}. This intuitive normalization has been considered in monomial-based basis construction algorithms~\cite{heldt2009approximate,limbeck2014computation,fassino2010almost}. However, introducing the coefficient normalization into the polynomial-based algorithms leads to significantly costly computation.
To sidestep costly polynomial expansions, we propose a method that obtains coefficients of polynomials by exploiting the iterative nature of basis construction and by precomputation. Furthermore, we propose a coefficient truncation method that enables the coefficient normalization to work much faster while giving up the exact calculation of coefficients.

In the experiments, we evaluate the Simple Basis Construction (SBC) algorithm that is designed as a simple polynomial-based algorithm for discussing and implementing our methods. Vanishing Component Analysis
(VCA;~\cite{livni2013vanishing}) is also adopted as a baseline method because it is the most widely used
basis construction algorithm of approximate vanishing ideal in various applications~\cite{zhao2014hand,yan18deep,wang2018nonlinear}.
Throughout the calculations, we show that VCA encounters severe coefficient growth
and decay, resulting in spurious vanishing polynomials with small coefficients and spurious nonvanishing polynomials with
large coefficients. When such polynomials are normalized to have a
unit coefficient norm, spurious vanishing polynomials turn into nonvanishing polynomials, and spurious nonvanishing polynomials turn into approximate vanishing polynomials. In contrast, the SBC algorithm with coefficient normalization does not encounter any spurious vanishing and nonvanishing polynomials. In classification tasks, the SBC algorithm with the coefficient normalization extracts shorter feature vectors while keeping comparable or even lower classification errors than VCA. 

Our contributions are summarized as follows:
\begin{itemize}
    \item We propose the first general method that can introduce normalization into polynomial-based basis construction algorithms of approximate vanishing ideal to avoid the spurious vanishing problem. 
    Rigorous theoretical analysis on the validity, optimality, and stability are provided.
    \item We propose two efficient methods for coefficient normalization, which is computationally costly to introduce into polynomial-based basis construction algorithms.
\end{itemize}

\section{Related Work}
\textbf{Monomial-based algorithms}---In computer algebra, basis polynomials of approximate vanishing ideals are generated from linear combinations of monomials during the basis construction~\cite{sauer2007approximate,abbott2008stable,heldt2009approximate,fassino2010almost,limbeck2014computation}. We refer to such basis construction algorithms as monomial-based algorithms. In these algorithms, the coefficient normalization is considered; distinct monomials are linearly combined with unit vectors, and thus, the norm of the coefficient vectors of the obtained polynomials is ensured to be unity. 
Most monomial-based algorithms are based on the Buchberger--M\"oller algorithm~\cite{moller1982construction} and its extension~\cite{kehrein2006computing}. The former computes the Gr\"obner bases of exact vanishing ideals and the latter computes the border bases, which are generalization of the Gr\"obner basis in the case of zero-dimensional ideals~\cite{setter2004numerical}. In these algorithms, one only has to handle a small number of monomials, typically, at most the number of input points.
Both the Gr\"obner bases and border bases hold powerful theoretical properties, which allow us to address various fundamental problems in computer algebra such as solving polynomials systems and the ideal membership problem. For approximate vanishing ideals, computed basis sets no longer demonstrate such properties, but are expected to show close behavior. In particular, border bases are considered more than the Gr\"obner bases for addressing approximate vanishing ideals because of its numerical stability in the coefficients and because of the numerical instability of the Gr\"obner bases~\cite{setter2004numerical,robbiano2010approximate,fassino2010almost}. A border basis (or a Gr\"obner basis) is defined with a monomial order to make the calculation well-defined. In computer algebra, the choice of the monomial order does not have much effect on solving most problems or a proper monomial order is known. However, when it comes to solving problems in other fields such as machine learning, the dependence of basis construction on the monomial order becomes problematic. There are a few monomial-based algorithms that work without monomial orders. 
For example, Sauer et al.~\cite{sauer2007approximate} proposed an algorithm to compute approximate H bases, which consist of homogeneous polynomials.
Hashemi et al.~\cite{hashemi2019computing} proposed a method to compute border bases for all possible monomial orders. In contrast to the algorithms based on the Buchberger--M\"oller algorithm, none of these algorithms work in polynomial time. Moreover, the computation technique used in the algorithm of Sauer et al. cannot be straightforwardly introduced into polynomial-based algorithms. The output of the algorithm of Hasemi et al. is a set of border bases. As shown in their experiments, the number of obtained polynomials across all the obtained number of border bases is large even for a small set of points.  

\noindent\textbf{Polynomial-based algorithms}---In contrast to monomial-based algorithms, polynomial-based algorithms do not rely on the monomial order and thus, are more commonly used in various fields. These algorithms consider linear combinations of polynomials instead of monomials during the basis construction. Although the combination vectors are unit, the terms of summed polynomials can cancel out or merge. As a consequence, the coefficients of polynomials decay and grow drastically, resulting in the spurious vanishing problem. Based on computer algebraic algorithms, Livni et al. proposed a pioneering polynomial-based algorithm, VCA, which is followed by various extensions. However, discarding monomial orders in basis construction leads to various theoretical issues that have not appeared in monomial-based algorithms. To our knowledge, these issues are rarely discussed in the literature. This paper focuses on the aforementioned issue of the coefficient decay and growth.

\section{Preliminaries\label{sec:Preliminaries}}
Hereinafter, the notations and definitions are based on~\cite{livni2013vanishing} and~\cite{kera2018approximate}, although font styles and descriptions are modified for ease of understanding and consistency.
\subsection{Definitions and Notations}
\begin{defn}[Vanishing Ideal]
Given a set of points $X\subset\mathbb{R}^n$, the vanishing ideal
of $X$ is the set of $n$-variate polynomials that take a zero value
(i.e., vanish) for any point in $X$. Formally, 
\begin{align*}
\mathcal{I}(X) & =\left\{ g\in\mathcal{P}_{n}\mid\forall\boldsymbol{x}\in X,g(\boldsymbol{x})=0\right\}.
\end{align*}
\end{defn}
\begin{defn}[Evaluation vector]
Given a set of points $X=\{\boldsymbol{x}_1,\boldsymbol{x}_2,...,\boldsymbol{x}_{|X|}\}\subset\mathbb{R}^n$, the evaluation vector of polynomial $h\in\mathcal{P}_n$ for $X$ is 
\begin{align*}
h(X) & =\begin{pmatrix}h(\boldsymbol{x}_{1}) & h(\boldsymbol{x}_{2}) & \cdots & h(\boldsymbol{x}_{|X|})\end{pmatrix}^{\top}\in\mathbb{R}^{|X|},
\end{align*}
where $|\cdot|$ denotes the cardinality of a set.
For a set of polynomials $H=\left\{ h_{1},h_{2},\ldots,h_{|H|}\right\}$,
its evaluation matrix for $X$ is $H(X)=(h_{1}(X)\ h_{2}(X)\ \cdots\ h_{|H|}(X))\in\mathbb{R}^{|X|\times |H|}$. 
\end{defn}

\begin{defn}[$\epsilon$-vanishing polynomial]
Given $\epsilon\ge 0$, a polynomial $g\in\mathcal{P}_n$ is an $\epsilon$-vanishing polynomial for a set of points $X\subset\mathcal{P}_n$ if $\|g(X)\|\le\epsilon$, where $\|\cdot\|$ denotes the Euclidean norm. Otherwise, $g$ is an $\epsilon$-nonvanishing
polynomial. 
\end{defn}

In the literature of vanishing ideals, polynomials are identified with their $|X|$-dimensional evaluation vectors.
Two polynomials $h$ and $\widetilde{h}$ are considered equivalent
if they have the same evaluation vector, i.e., $h(X)=\widetilde{h}(X)$.
When $h(X)=\boldsymbol{0}$, then $h$ is a vanishing polynomial for $X$.
It is worth noting that the sum of evaluation of polynomials equals to the evaluation of sum of the polynomials; that is, for a set of polynomials $H=\left\{ h_{1},...,h_{|H|}\right\} $
and weight vectors $\boldsymbol{v}=(v_{1,}v_{2},...,v_{|H|})^{\top}\in\mathbb{R}^{|H|}$,
\begin{align*}
H(X)\boldsymbol{v} & =(H\boldsymbol{v})(X),
\end{align*}
where $H\boldsymbol{v}=\sum_{i=1}^{|H|}v_{i}h_{i}$ defines the inner
product between a set $H$ and a vector $\boldsymbol{v}$. This special
inner product will be used hereafter. Similarly, for a matrix $V=\left(\boldsymbol{v}_{1}\cdots\boldsymbol{v}_{s}\right)\in\mathbb{R}^{|H|\times s}$, we define the product of $H$ and $V$ as $HV=\left\{ H\boldsymbol{v}_{1},...,H\boldsymbol{v}_{s}\right\}$.
In this way, polynomials and their sums are mapped to finite-dimensional
vectors, and the linear algebra can be used for the basis construction of vanishing ideals. 

\subsection{Simple Basis Construction Algorithm}
 Given a set of data points $X\subset\mathbb{R}^n$ and error tolerance $\epsilon$, the goal of the basis construction is to output $F,G\subset\mathcal{P}_n$, where $F$ is a basis set of $\epsilon$-nonvanishing polynomials and $G$ is a basis set of $\epsilon$-approximate vanishing polynomials of the approximate vanishing ideal of $X$. In the exact case ($\epsilon=0$), any vanishing polynomial $g\in\mathcal{I}(X)$ can be generated by $G$ as 
\begin{align}
    g = \sum_{g^{\prime}\in G} h_{g^{\prime}} g^{\prime}\label{eq:generator-g},
\end{align}
where $h_{g^{\prime}}\in \mathcal{P}_n$. This is similar to basis sets of linear subspaces except that the \textit{coefficients} are here polynomials. We define $\langle G \rangle = \{g\in\mathcal{P}_n\mid g= \sum_{g'\in G} h_{g'}g', h_{g'}\in\mathcal{P}_n\}$ as the set of polynomials generated by $G$.
Any polynomial $f\in\mathcal{P}_n$ can be described as \begin{align}
    f &= f^{\prime} + g^{\prime},\label{eq:generator-fg}
\end{align}
where $f^{\prime}\in\mathrm{span}(F)$ and $g^{\prime}\in\mathcal{I}(X)$. Here, $\mathrm{span}(F)$ denotes the set of all linear combinations of polynomials in $F$, i.e., $\mathrm{span}(F)=\{f\in\mathcal{P}_n\mid \sum_{f^{\prime}\in F}a_{f^{\prime}}f^{\prime}, a_{f^{\prime}}\in\mathbb{R}\}$.

There are many basis construction algorithms of approximate vanishing ideals.
Although our idea of normalization and its realization method work with most of them, to avoid unnecessarily abstract discussion, we focus on a simple polynomial-based basis construction algorithm, referred to as the SBC algorithm. 
The input to the SBC algorithm is a set of points $X\subset \mathbb{R}^{n}$ and error tolerance $\epsilon\ge 0$. The algorithm proceeds from degree-0 polynomials to higher degree polynomials. At each degree $t$, a set of nonvanishing polynomials $F_t$ and a set of vanishing polynomials $G_t$ are generated. We use notations $F^t=\bigcup_{\tau=0}^{t}F_{\tau}$ and $G^t=\bigcup_{\tau=0}^{t}G_{\tau}$. For $t=0$, $F_0 = \{m\}$ and $G_0 = \emptyset$, where $m$ is any nonzero constant polynomial. 
At each degree $t\ge 1$, the following procedures are conducted.

\paragraph*{Step 1: Generate a set of candidate polynomials $C_t$}
Pre-candidate polynomials of degree-$t$ for $t>1$ are generated by multiplying nonvanishing polynomials between $F_1$ and $F_{t-1}$.
\begin{align*}
    C_t^{\mathrm{pre}} = \{pq \mid p\in F_1, q\in F_{t-1}\}.
\end{align*}
At $t=1$, we use $C_1^{\mathrm{pre}}=\{x_1,x_2,...,x_n\}$, where $x_k$ are variables. A set of candidate polynomials is then generated via orthogonalization procedure. 
\begin{align}\label{eq:orthogonalization}
     C_{t} &= C_{t}^{\mathrm{pre}} - F^{t-1}F^{t-1}(X)^{\dagger}C_{t}^{\mathrm{pre}}(X),
 \end{align}
 where $\cdot^{\dagger}$ is a pseudo-inverse of a matrix.
\paragraph*{Step 2: Solve an eigenvalue problem for $C_t(X)$}{
Solve the following eigenvalue problem for the evaluation matrix $C_t(X)$,
\begin{align}\label{eq:evp}
    C_t(X)^{\top}C_t(X)V = V\Lambda,
\end{align}
where $V$ is a matrix that has eigenvectors $\boldsymbol{v}_1,\boldsymbol{v}_2,...,\boldsymbol{v}_{|C_t|}$ in its columns and $\Lambda$ is a diagonal matrix with eigenvalues $\lambda_1,\lambda_2,...,\lambda_{|C_t|}$ eigenvalues along its diagonal. 
}
\paragraph*{Step 3: Construct sets of basis polynomials}{
Basis polynomials are generated by linearly combining polynomials in $C_t$ with $\boldsymbol{v}_1,\boldsymbol{v}_2,...,\boldsymbol{v}_{|C_t|}$.
\begin{align*}
    F_t &= \{C_t\boldsymbol{v}_i\mid \sqrt{\lambda_i} > \epsilon, i=1,2,...,|C_t|\},\\
    G_t &= \{C_t\boldsymbol{v}_i\mid \sqrt{\lambda_i} \le \epsilon, i=1,2,...,|C_t|\}. 
\end{align*}
If $|F_t| = 0$, the algorithm terminates with output $F^t$ and $G^t$.
}

\begin{rem}
At Step~1, the orthogonalization procedure~(\ref{eq:orthogonalization}) makes the column space of $C_{t}(X)$ orthogonal to that of $F^{t-1}(X)$, aiming at focusing on the subspace of $\mathbb{R}^{|X|}$ that cannot be spanned by the evaluation vectors of polynomials of degree less than $t$ (note  that $C_t(X) = (I-F^{t-1}(X)F^{t-1}(X)^{\dagger})C_{t}^{\mathrm{pre}}(X)$, where $I$ is the identity matrix). 
\end{rem}

\begin{rem}\label{rem:extent-of-vanishing}
At Step~3, a polynomial $C_t\boldsymbol{v}_i$ is classified as an $\epsilon$-vanishing polynomial if $\sqrt{\lambda_i}\le \epsilon$ because $\sqrt{\lambda_i}$ equals the extent of vanishing of a polynomial $C_t\boldsymbol{v}_i$. In fact,
\begin{align*}
    \|(C_{t}\boldsymbol{v}_{i})(X)\|  =\sqrt{\boldsymbol{v}_{i}^{\top}C_{t}(X)^{\top}C_{t}(X)\boldsymbol{v}_{i}}=\sqrt{\lambda_{i}}.
\end{align*}
\end{rem}

The aforementioned algorithm shows the fundamental procedures of polynomial-based basis construction. This algorithm is quite similar to VCA~\cite{livni2013vanishing}, implying the elegance of VCA. Existing polynomial-based algorithms can be discussed based on the SBC algorithm by slightly changing each step and introducing additional procedures for various properties, such as stability, scalability, and compactness~\cite{livni2013vanishing,kiraly2014dual,kera2018approximate}. In this paper, we discuss using the SBC algorithm and will note when some algorithm-specific consideration is necessary. Henceforth, we refer to each step of the above algorithm as \texttt{Step1}, \texttt{Step2}, and \texttt{Step3}. 

\begin{thm}\label{thm:basis}\label{THM:BASIS}
When SBC runs with $\epsilon=0$ for a set of points $X$, the output basis sets $G$ and $F$ satisfy the following. \begin{itemize}
    \item Any vanishing polynomial $g\in\mathcal{I}(X)$ can be generated by $G$, i.e., $g\in\langle G\rangle$. 
    \item Any polynomial $f$ can be represented by $f = f^{\prime} + g^{\prime}$, where $f^{\prime}\in\mathrm{span}(F)$ and $g^{\prime}\in\langle G\rangle$.
    \item For any $t$, any degree-$t$ vanishing polynomial $g\in\mathcal{I}(X)$ can be generated by $G^t$, i.e., $g\in\langle G^t\rangle$. 
    \item For any $t$, any degree-$t$ polynomial $f$ can be represented by $f = f^{\prime} + g^{\prime}$, where $f^{\prime}\in\mathrm{span}(F^t)$ and $g^{\prime}\in\langle G^t\rangle$.
\end{itemize}
\end{thm}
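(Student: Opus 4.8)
The plan is to prove all four assertions by one induction on the degree $t$, carrying two mutually supporting invariants. Writing $\mathcal{A}_t:=\mathrm{span}\{h(X):h\in\mathcal{P}_n,\ \deg h\le t\}\subseteq\mathbb{R}^{|X|}$, the invariants are: \emph{(a)} the evaluation vectors of the polynomials in $F^t$ are linearly independent and span $\mathcal{A}_t$; and \emph{(b)} every polynomial $f$ with $\deg f\le t$ admits $f=f'+g'$ with $f'\in\mathrm{span}(F^t)$ and $g'\in\langle G^t\rangle$. Invariant (b) is exactly the fourth bullet, and the third follows from (a) and (b): if in addition $f=g\in\mathcal{I}(X)$, then since every element of $G^t$ vanishes on $X$ (Remark~\ref{rem:extent-of-vanishing}, so $\langle G^t\rangle\subseteq\mathcal{I}(X)$) the polynomial $f'=g-g'$ vanishes on $X$, and the independence in (a) forces $f'=0$, hence $g=g'\in\langle G^t\rangle$. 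The first two bullets I would then obtain from (a) and (b) at the termination degree, together with a short extension to all degrees.

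The base case $t=0$ is immediate from $F_0=\{m\}$ ($m$ a nonzero constant) and $G_0=\emptyset$. For the inductive step at $t\ge1$ I would first record the structural facts of \texttt{Step1}--\texttt{Step3}: the elements of $C_t$, $F_t$, $G_t$ have degree $\le t$; by the orthogonalization in \texttt{Step1} and the inductive (a), $C_t(X)=(I-P_{t-1})C_t^{\mathrm{pre}}(X)$ where $P_{t-1}$ is the orthogonal projection onto $\mathcal{A}_{t-1}$; in \texttt{Step2}--\texttt{Step3} the vectors $(C_t\boldsymbol v_i)(X)$ with $\lambda_i>0$ are pairwise orthogonal, nonzero, and span $\mathrm{col}(C_t(X))$ while those with $\lambda_i=0$ vanish; and $C_t\subseteq\mathrm{span}(F_t\cup G_t)$ since $C_t=(C_tV)V^{\top}$ with $V$ orthonormal. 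I would then prove (b) at degree $t$ by reducing to monomials: a monomial of degree $\le t-1$ is handled by the inductive (b), and a degree-$t$ monomial is written $x^{\alpha}=x_j\,x^{\gamma}$ with $|\gamma|=t-1$ (for $t=1$ directly as an element of $C_1^{\mathrm{pre}}=\{x_1,\dots,x_n\}$); applying the inductive (b) to $x_j$ and to $x^{\gamma}$ and expanding, every cross term meeting $\langle G^{t-1}\rangle$ stays there because that set is an ideal, and the remaining term expands into combinations of $m\cdot q$ (in $\mathrm{span}(F^{t-1})$), of $p\cdot q$ with $p\in F_1$, $q\in F_r$, $r<t-1$ (degree $<t$, covered by the inductive (b)), and of $p\cdot q$ with $p\in F_1$, $q\in F_{t-1}$, which lies in $C_t^{\mathrm{pre}}\subseteq C_t+\mathrm{span}(F^{t-1})\subseteq\mathrm{span}(F_t\cup G_t)+\mathrm{span}(F^{t-1})$; hence $x^{\alpha}\in\mathrm{span}(F^t)+\langle G^t\rangle$. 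Evaluating (b) at $t$ and using $\mathrm{col}(F^t(X))\subseteq\mathrm{span}(C_t^{\mathrm{pre}}(X))+\mathcal{A}_{t-1}$ (from the inductive (a) and the structure of $C_t$ above) yields the key identity $\mathcal{A}_t=\mathrm{span}(C_t^{\mathrm{pre}}(X))+\mathcal{A}_{t-1}$; projecting onto $\mathcal{A}_{t-1}^{\perp}$ gives $\mathrm{col}(C_t(X))=\mathcal{A}_t\cap\mathcal{A}_{t-1}^{\perp}$, so $F_t(X)$ is an orthogonal basis of that complement and, with the inductive basis $F^{t-1}(X)$ of $\mathcal{A}_{t-1}$, $F^t(X)$ is a basis of $\mathcal{A}_t$, proving (a).

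For the first two bullets: since $\dim\mathcal{A}_t$ is nondecreasing in $t$ and bounded by $|X|$, there is a degree $D$ with $\mathcal{A}_D=\mathcal{A}_{D-1}$, and then (a) gives $|F_D|=0$, so the algorithm halts with $F=F^{D}$, $G=G^{D}$; multiplying a vanishing relation among degree-$(D{-}1)$ monomials by an arbitrary variable shows $\mathcal{A}_s=\mathcal{A}_{D-1}$ for every $s\ge D-1$, i.e.\ $\mathcal{A}_D$ spans all evaluation vectors. It then remains to extend (b) from degrees $\le D$ to all degrees with $F^D,G^D$ in place of $F^t,G^t$: this is an easy further induction on the degree, since in the product-expansion step every relevant $q$ lies in $F^{D}=F^{D-1}$, so $p\cdot q\in C_D^{\mathrm{pre}}\subseteq\mathrm{span}(F^D)+\langle G^D\rangle$ when $q\in F_{D-1}$, and $\deg(p\cdot q)<D$ otherwise. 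The first two bullets are precisely this extended (b) together with the independence from (a). I expect the \textbf{main obstacle} to be the key identity $\mathcal{A}_t=\mathrm{span}(C_t^{\mathrm{pre}}(X))+\mathcal{A}_{t-1}$: it asserts that the restricted family of products $F_1\cdot F_{t-1}$---rather than all products of a variable with a degree-$(t-1)$ monomial---still captures every degree-$t$ evaluation vector modulo the lower-degree span, and this is exactly where the inductive generation hypothesis (b) and the ideal property of $\langle G^{t-1}\rangle$ must be combined with care.
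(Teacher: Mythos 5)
Your argument is correct, but it takes a genuinely different route from the paper. The paper's proof is a two-line reduction: SBC coincides with VCA up to nonzero constant rescalings of the basis polynomials (VCA sets $F_0=\{1/\sqrt{|X|}\}$ and divides each $f\in F_t$ by $\|f(X)\|$), and since rescaling by nonzero constants changes neither $\mathrm{span}(F^t)$ nor $\langle G^t\rangle$, the claim follows from Theorem~5.2 of~\cite{livni2013vanishing}. You instead reprove that result from scratch, and your skeleton is sound: the two invariants (linear independence and spanning of $F^t(X)$ over $\mathcal{A}_t$, plus the representation $f=f'+g'$) are exactly what is needed, the order of the inductive step is non-circular (you derive (b) at level $t$ from levels $\le t-1$ together with $C_t\subseteq\mathrm{span}(F_t\cup G_t)$ and $C_t^{\mathrm{pre}}\subseteq C_t+\mathrm{span}(F^{t-1})$, and only then use (b) to get (a)), and you correctly identify the crux: showing that the restricted product set $F_1\cdot F_{t-1}$ captures all degree-$t$ monomials modulo lower degrees, which is where the inductive representation of $x_j$ and $x^\gamma$, the ideal property of $\langle G^{t-1}\rangle$, and the fall-back to lower-degree products $p\cdot q$ with $q\in F_r$, $r<t-1$, must all be combined. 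Your termination and stabilization argument ($\dim\mathcal{A}_t\le|X|$, and $\mathcal{A}_D=\mathcal{A}_{D-1}$ propagates to all higher degrees by multiplying a vanishing relation by a variable) and the extension of (b) past the termination degree are also the standard and correct way to get the first two bullets. What each approach buys: the paper's citation is economical but opaque about why the algorithm's restricted candidate generation is complete; your self-contained proof exposes that mechanism explicitly, at the cost of essentially reproducing the VCA correctness proof, and it would also transfer more directly to Theorem~\ref{thm:basis-is-basis}, where the paper again argues by comparison rather than from first principles.
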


\begin{proof} 
The SBC algorithm is identical to VCA up to a constant factor in basis polynomials. Specifically, VCA set $F_0=\{m\}=\{1/\sqrt{|X|}\}$, and normalize polynomials $f\in F_t$ by $\|f(X)\|$ at each degree. 
Thus, from Theorem~5.2 in~\cite{livni2013vanishing}, which shows that VCA satisfies Theorem~\ref{thm:basis}, we can conclude that SBC also satisfies Theorem~\ref{thm:basis}.
\end{proof}
In section~\ref{sec:basis-construction}, the SBC algorithm is redefined by extending \texttt{Step2} to \texttt{Step2}$^{\prime}$ with an introduction of normalization. We will prove that Theorem~\ref{thm:basis} still holds after the redefinition (cf. Theorem~\ref{thm:basis-is-basis}). The pseudocodes for the redefined SBC will be provided in Algorithms~\ref{alg:SBC},~\ref{alg:nm}, and~\ref{alg:bc}.

\subsection{Toy example}
First, we examine the computation of SBC with simple data points to provide an intuition for the spurious vanishing problem. Let us consider SBC for $X= \{(1+\xi,1),(1,1+\xi),(-1+\xi,-1+\xi),(-1,-1)\}$, where $\xi=0.1$. We use $\epsilon=\sqrt{4\xi^2}=2\xi=0.2$. The constant polynomial $m$ at degree 0 is set to $m=1$, and thus, $F_0=\{f_1:=1\},G_0=\{\}$. We use $x$ and $y$ for variables and the values are rounded for illustration while keeping the use of ``$=$'' notation. The coefficient vector of a  polynomial $h$ is denoted by $\mathfrak{n}_{\mathrm{c}}(h)$, e.g., $\mathfrak{n}_{\mathrm{c}}(1-x+2xy) = (1,-1,0,0,2,0)^{\top}$, and $\|\mathfrak{n}_{\mathrm{c}}(h)\|$ is called the coefficient norm of $h$.

\paragraph*{Degree $t=1$}
At \texttt{Step 1}, the candidate polynomials are generated as follows.
\begin{align*}
    C_1 
    &= \{x, y\} - \{1\}
    \frac{1}{4}\left(
    \begin{array}{cccc}
        1 & 1 & 1 & 1
    \end{array}
    \right)
    \left(\begin{array}{cc}
       1.1  & 1 \\
        1 & 1.1 \\
        -0.9 & -0.9 \\
        -1 & -1
    \end{array}\right),\\
    &= \{x - 0.05, y-0.05\}.
\end{align*}
The eigenvalues of $C_1(X)$ is $\{0.01, 8.01\}$, and the corresponding eigenvectors are $\boldsymbol{v}_1=(-0.707, 0.707)^{\top}$ and $\boldsymbol{v}_2=(0.707, 0.707)^{\top}$. According to the square roots of eigenvalues $\{\sqrt{0.01}, \sqrt{8.01}\} = \{0.1, 2.83\}$, 
\begin{align*}
    G_1 &= \{g_1:=C_1\boldsymbol{v}_1 = -0.707(x-0.05)+0.707(y-0.05)\}, \\
    F_1 &= \{f_2:=C_1\boldsymbol{v}_2 = 0.707(x-0.05)+0.707(y-0.05)\}.
\end{align*}

\paragraph*{Degree $t=2$}
The set of precandidate polynomials is $C_2^{\mathrm{pre}}=\{f_2^2\}= \{(0.707(x-0.05)+0.707(y-0.05))^2\}$, and thus, the set of candidate polynomials of degree 2 is 
\begin{align*}
    C_2 
    &= \{(0.707(x-0.05)+0.707(y-0.05))^2\} \\
    &\quad\ - \{1, f_2\}
    \left(
    \begin{array}{cc}
        \boldsymbol{1}_4 & f_2(X)
    \end{array}
    \right)^{\dagger}
    f_2^2(X),\\
    &= \{0.5x^2 + xy + 0.5y^2 - 0.096x - 0.0963y - 2.00\},
\end{align*}
where $\boldsymbol{1}_4\in\mathbb{R}^4$ is the all-one vector.
The eigenvalue and eigenvector of $C_2(X)$ are $0.78$ and $(1.0)$, respectively. Thus, $G_2=\{\}$ and $F_2=\{f_3:=0.5x^2 + xy + 0.5y^2 - 0.096x - 0.0963y - 2.00\}$.

\paragraph*{Degree $t=3$}
The set of precandidate polynomials is $C_3^{\mathrm{pre}}=\{f_2f_3\}$ and thus, the set of candidate polynomials of degree 3 is 
\begin{align*}
    C_3 
    &= \{f_2f_3\} \\
    &\quad\ - \{1, f_2, f_3\}
    \left(
    \begin{array}{ccc}
        \boldsymbol{1}_4 & f_2(X) & f_3(X)
    \end{array}
    \right)^{\dagger}
    (f_2f_3)(X),\\
    &= \{0.354x^3 + 1.061x^2y + 1.061xy^2 + 0.354y^3 \\
    &\quad\quad + 0.601x^2 + 1.202xy + 0.601y^2- 1.549x\\ 
    &\quad\quad  - 1.549y - 2.671\}.
\end{align*}
The eigenvalue and eigenvector of $C_3(X)$ are $0$ and $(1.0)$, respectively. Thus, $G_3=\{g_2\}:=C_3$ and $F_2=\{\}$, and the algorithm terminates with the outputs $G=\bigcup_{t=0}^{3}G_t = \{g_1,g_2\}$ and $F=\bigcup_{t=0}^{3}F_t=\{1, f_2,f_3\}$.

\begin{figure}
    \centering
    \includegraphics[width=\linewidth]{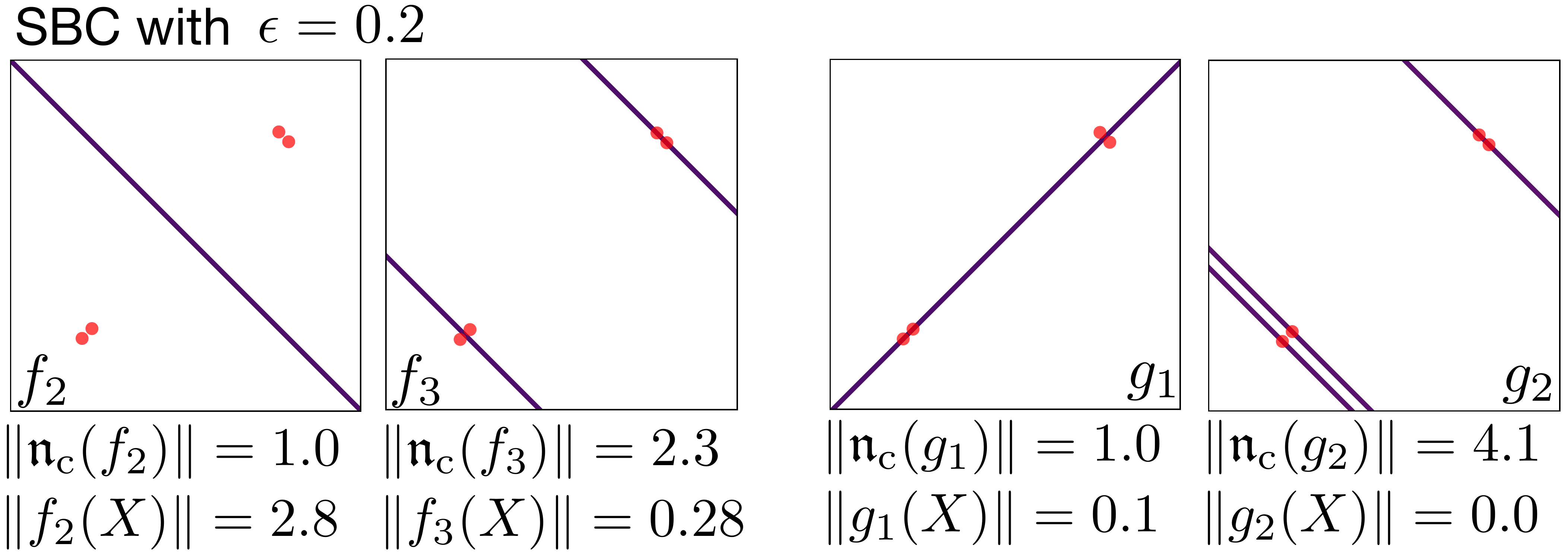}
    \caption{$\epsilon$-nonvanishing polynomials ($f_2$ and $f_3$) and $\epsilon$-vanishing polynomials ($g_1$ and $g_2$) obtained by SBC for $X= \{(1.1,1),(1,1.1),(-0.9,-0.9),(-1,-1)\}$ with $\epsilon=0.2$. The coefficient vector of a polynomial $h$ is denoted by $\mathfrak{n}_{\mathrm{c}}(h)$. Here, $f_3$ is classified as a nonvanishing polynomial based on its extent of vanishing $\|f_3(X)\|=0.28>\epsilon$, which is overrated because of the relatively large coefficient norm $\|\mathfrak{n}_{\mathrm{c}}(f_3)\|=2.3$.}
    \label{fig:example}
\end{figure}
\begin{figure}
    \centering
    \includegraphics[width=\linewidth]{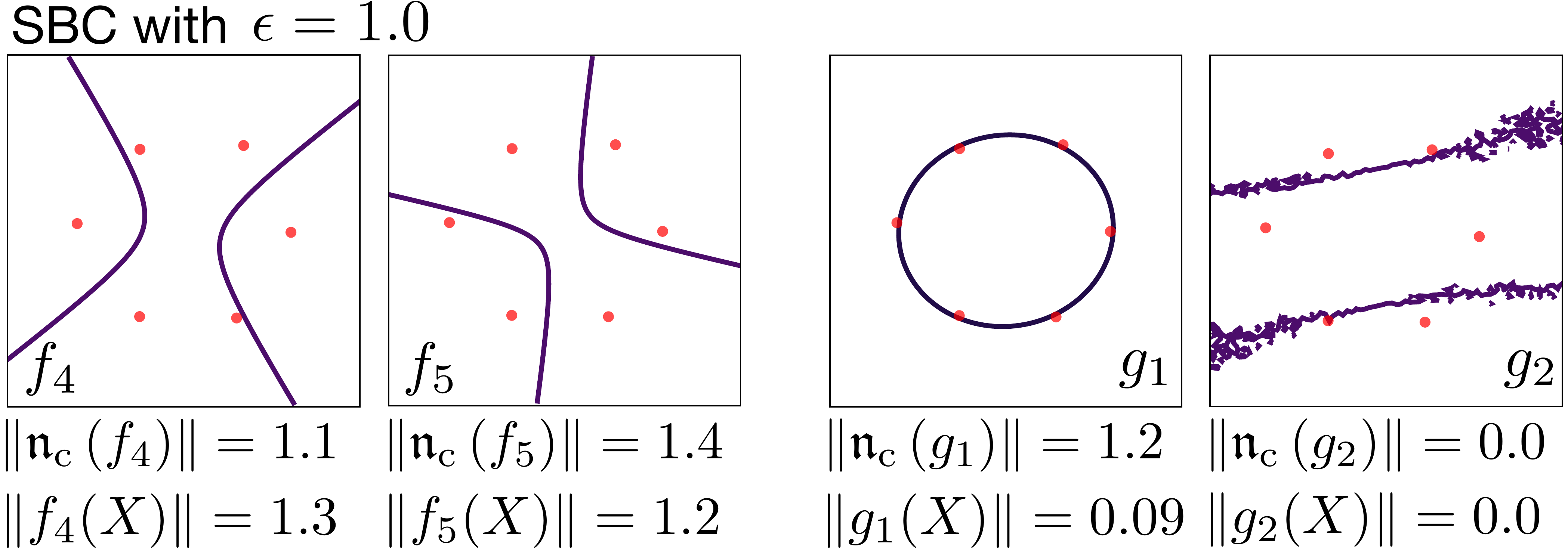}
    \caption{Some of the $\epsilon$-nonvanishing polynomials ($f_4$ and $f_5$) and $\epsilon$-vanishing polynomials ($g_1$ and $g_2$) obtained by SBC for $X$ (perturbed $X_0 = \{(\cos(k\pi/3),\sin(k\pi/3))\}_{k=0,1,...,5}$) with $\epsilon=1.0$. 
    Here, $g_2$ is classified as an $\epsilon$-vanishing polynomials based on its extent of vanishing $\|g_2(X)\|= 0.0\le\epsilon$, which is underrated because of its small coefficient norm $\|\mathfrak{n}_{\mathrm{c}}(g_2)\|=0.0$ (only approximately equal to zero but rounded off as 0.0).}
    \label{fig:example2}
\end{figure}
We provide contour plots of the obtained polynomials (except $f_1$) in Fig.~\ref{fig:example}. The coefficient norm and the extent of the vanishing of these polynomials are listed below the plots. Here, $f_3$ is classified as a nonvanishing polynomial based on $\|f_3(X)\|=0.28>\epsilon$, although the lines approximately pass through the points. This results from the large coefficient norm of $f_3$, i.e., $\|\mathfrak{n}_{\mathrm{c}}(f_3)\|=2.3$. In other words, $f_3$ is a spurious nonvanishing polynomial. In fact, once $f_3$ is normalized to $f^{\prime}_3:=f_3/\|\mathfrak{n}_{\mathrm{c}}(f_3)\|$, it becomes an $\epsilon$-vanishing polynomial because of $\|f^{\prime}_3(X)\|\approx 0.12 \le \epsilon$. We observe the opposite case when we apply SBC to another set of points $X=\{(0.53,0.87),(-0.49,0.83),(-1.1,0.1),(-0.5,-0.81),\\ (-0.46, -0.83),(0.99,0.02)\}$ with $\epsilon=1.0$. This $X$ is generated by perturbing points of $X_0 = \{(\cos(k\pi/3),\sin(k\pi/3))\}_{k=0,1,...,5}$ on a unit circle with the zero-mean additive Gaussian noise (the standard deviation is 0.01). As shown in Fig.~\ref{fig:example2}, $g_2$ is classified as an $\epsilon$-vanishing polynomial, although the lines on the plot do not pass through the points at all. This is because the magnitude of the coefficients of $g_2$ is extremely small. In other words, $g_2$ is a spurious vanishing polynomial.

\section{Proposed Method}
A polynomial can be approximately vanishing only because of its small coefficients---this is the spurious vanishing problem. To avoid this problem, we propose that approximate vanishing polynomials (and nonvanishing polynomials) be normalized by some scale, such that the spurious vanishing polynomials are properly rescaled and their actual behavior for input points becomes evident. 

Here, we describe the proposed methods that deal with the challenges of introducing normalization. We intend to answer the following questions: how do we optimally generate basis polynomials under a normalization, such as coefficient normalization?~(Section~\ref{sec:basis-construction}); how do we efficiently extract coefficients from polynomials and manipulate them for coefficient normalization~(Section~\ref{sec:coefficient-normalization})?

\subsection{Polynomial-based basis construction with Normalization}\label{sec:basis-construction}
Here, we describe a proposed method, which enables SBC algorithm to construct nonvanishing and vanishing polynomials under given normalization. This method is general enough to be applied to other polynomial-based basis construction algorithms, accompanied with the following advantages: (i) it requires to rewrite only a few lines of original algorithms\footnote{Except the calculation of values that are necessary for normalization, which depends on which normalization is used.}; (ii) it is not limited to coefficient normalization; (iii) it can inherit most properties of the original algorithms. For simplicity, we first focus on coefficient normalization and then provide the general description. 

The coefficient vector $\mathfrak{n}_\mathrm{c}(h)$ of a polynomial $h$ is defined as a vector that lists the coefficients of monomials, e.g., $\mathfrak{n}_\mathrm{c}(1-x+2x^3)=(1,-1,0,2)^{\top}$. The order of listing is arbitrary if it is consistent across polynomials. The length of coefficient vectors also has an arbitrarity; for example, $\mathfrak{n}_\mathrm{c}(1-x+2x^3)=(1,-1,0,2,0,0)^{\top}$ is also valid (the last two zeros correspond to $x^4$ and $x^5$). When we perform an operation (e.g., the dot product) on two coefficient vectors, the shorter vector is extended by padding zeros. 

In the coefficient normalization, we evaluate the extent of vanishing of $g$ for $X$ by normalizing $g$ with respect to its coefficient norm as
\begin{align*}
    \frac{g}{\|\mathfrak{n}_\mathrm{c}(g)\|}.
\end{align*}
Because spurious vanishing polynomials have small coefficient norms, these polynomials are largely scaled in the normalization as previously d. Similarly, spurious nonvanishing polynomials, which are polynomials that are nonvanishing because of their unreasonably large coefficients, are rescaled to have a moderate scale of coefficients.

As previously mentioned, the coefficient normalization has been considered in monomial-based algorithms, but not in polynomial-based algorithms, leading polynomial-based algorithms to suffer from the spurious vanishing problem. One reason that polynomial-based algorithms fail to consider the coefficient normalization is that it has been unknown how to optimally generate combination vectors ($\boldsymbol{v}_i$ of \texttt{Step2}) under this normalization. 
Recall that in \texttt{Step3}, a new polynomial $g$ is generated by linearly combining candidate polynomials in $C_t = \{c_1,c_2,...,c_{|C_t|}\}$. This can be formulated as
\begin{align*}
    g = \sum_{i=1}^{|C_t|} v_i c_i = C_t\boldsymbol{v},
\end{align*}
where $\boldsymbol{v}=(v_1,v_2,...,v_{|C_t|})^{\top}$ is a combination vector to be sought. 
The coefficient vector of $g$ is 
\begin{align*}
    \mathfrak{n}_{\mathrm{c}}(g) = \sum_{i=1}^{|C_t|} v_i \mathfrak{n}_{\mathrm{c}}(c_i) = \mathfrak{n}_{\mathrm{c}}(C_t)\boldsymbol{v},
\end{align*}
where we use a slight abuse of notation such that $\mathfrak{n}_{\mathrm{c}}(C_t)$ is a matrix whose $i$-th column is $\mathfrak{n}_{\mathrm{c}}(c_i)$.
Now, suppose that we want to find a polynomial that achieves the tightest vanishing under the coefficient normalization, which is formulated as follows:
\begin{align*}
    \min_{\boldsymbol{v}}\ \|C_t(X)\boldsymbol{v}\|^2, \quad\mathrm{s.t.}\  \|\mathfrak{n}_{\mathrm{c}}(C_t)\boldsymbol{v}\|^2 = 1.
\end{align*}
This type of minimization problem is well-known to be reduced to a generalized eigenvalue problem,
\begin{align}\label{eq:gep-coefcase}
    C_t(X)^{\top}C_t(X)\boldsymbol{v}_{\mathrm{min}} = \lambda_{\mathrm{min}} \mathfrak{n}_{\mathrm{c}}(C_t)^{\top}\mathfrak{n}_{\mathrm{c}}(C_t)\boldsymbol{v}_{\mathrm{min}},
\end{align}
where $\lambda_{\mathrm{min}}$ is the smallest generalized eigenvalue, and $\boldsymbol{v}_{\mathrm{min}}$ is the corresponding generalized eigenvector. We later show that the generalized eigenvectors of the $r$-smallest generalized eigenvalues generate polynomials that minimize the sum of the extent of vanishing under the normalization.
Therefore, to introduce the coefficient normalization, we only need to replace the eigenvalue problem~(\ref{eq:evp}) in \texttt{Step2} with the generalized eigenvalue problem~(\ref{eq:gep-coefcase}).

We now provide a general description of our method. 
\begin{defn}[Normalization mapping] Let \label{def:normalization-operator}
$\mathfrak{n}:\mathcal{P}_n\to \mathbb{R}^{\ell}$ be a mapping that satisfies the following.
\begin{itemize}
    \item $\mathfrak{n}$ is a linear mapping, i.e., $\mathfrak{n}(ah_1+bh_2) = a\mathfrak{n}(h_1)+b\mathfrak{n}(h_2)$, for any $a,b\in\mathbb{R}$ and any polynomials $h_1,h_2\in\mathcal{P}_n$.
    \item The dot product is defined between normalization components; that is, $\langle\mathfrak{n}(h_1),\mathfrak{n}(h_2) \rangle$ is defined for any polynomials $h_1,h_2\in\mathcal{P}_n$.
    \item $\mathfrak{n}(h)$ takes a zero value if and only if $h$ is the zero polynomial.
\end{itemize}
Then, $\mathfrak{n}$ is a normalization mapping.
$\mathfrak{n}(h)$ is called the normalization component of $h$, and $\|\mathfrak{n}(h)\|$ is called the norm (or $\mathfrak{n}$-norm) of $h$.
\end{defn}
Note that from the first requirement, the norm of the  zero polynomial needs to be zero value. The third requirement insists that the converse is also true, and this is the case for the coefficient normalization; that is, if $\mathfrak{n}_{\mathrm{c}}(h)=\boldsymbol{0}$, then $h$ is the zero polynomial. Let us consider $|C_t|$-dimensional vectors $\boldsymbol{v}_1=(v_1^{(1)},v_1^{(2)},...,v_1^{(|C_t|)})^{\top}$ and $\boldsymbol{v}_2=(v_2^{(1)},v_2^{(2)},...,v_2^{(|C_t|)})^{\top}$.
Using the first and second properties of $\mathfrak{n}$,
\begin{align*}
    \langle \mathfrak{n}(C_t\boldsymbol{v}_1), \mathfrak{n}(C_t\boldsymbol{v}_2)\rangle
    &= \Bigl\langle \mathfrak{n}\Bigl(\sum_{i}c_i v_1^{(i)}\Bigr), \mathfrak{n}\Bigl(\sum_{j}c_jv_2^{(j)}\Bigr)\Bigr\rangle,\\
    &= \sum_{i,j}\langle \mathfrak{n}(c_i), \mathfrak{n}(c_j)\rangle v_{1}^{(i)}v_{2}^{(j)},\\
    & = \boldsymbol{v}_1^{\top} \mathfrak{N}(C_t)\boldsymbol{v}_2,
\end{align*}
where $\mathfrak{N}$ is a mapping that gives a matrix $\mathfrak{N}(C_t)\in\mathbb{R}^{|C_t|\times |C_t|}$ for $C_t$, the $(i,j)$-th entry of which is $\langle \mathfrak{n}(c_i), \mathfrak{n}(c_j)\rangle$. 
With the constraints $\langle \mathfrak{n}(C_t\boldsymbol{v}_k), \mathfrak{n}(C_t\boldsymbol{v}_l)\rangle=\delta_{kl}$ for every $k$ and $l$, where $\delta_{kl}$  is the Kronecker delta, basis polynomials are generated by solving the following generalized eigenvalue problem.
\begin{align}\label{eq:gep}
    C_t(X)^{\top}C_t(X)V = \mathfrak{N}(C_t)V\Lambda,
\end{align}
where $\Lambda$ is a diagonal matrix containing generalized eigenvalues $\lambda_1,...,\lambda_{|C_t|}$, and $V$ is a matrix whose $i$-th column is the generalized eigenvector $\boldsymbol{v}_i$ corresponding to $\lambda_i$.
To summarize, \texttt{Step2} is replaced with the following \texttt{Step2$^{\prime}$} to introduce a normalization. The pseudocodes of SBC with \texttt{Step2}$^{\prime}$ are provided in Algorithms~\ref{alg:SBC},~\ref{alg:nm},~and~\ref{alg:bc}.
\paragraph*{Step 2$^{\prime}$: Solve the generalized eigenvalue problem for $C_t(X)$}
Solve the generalized eigenvalue problem~(\ref{eq:gep}) to obtain the generalized eigenvectors $\boldsymbol{v}_1,\boldsymbol{v}_2,...,\boldsymbol{v}_{|C_t|}$ and the generalized eigenvalues $\lambda_1,\lambda_2, ...,\lambda_{|C_t|}$.
\begin{rem}
In addition to replacing \texttt{Step2} with \texttt{Step2}$^{\prime}$, we set $F_0 = \{m\} = \{1\}$ for consistency in the coefficient normalization. 
\end{rem}

\begin{algorithm}[t]
\caption{Simple Basis Construction}\label{alg:SBC}

\begin{algorithmic}[1]
\Require{ $X\subset\mathbb{R}^n, \epsilon\ge 0, m\ne 0, \mathfrak{n}:\mathcal{P}_n\to\mathbb{R}^{\ell}$} 
\Ensure{$G,F$} 
\State{$G_0, F_0 = \{\}, \{m\}$} 
\State{$G, F = G_0, F_0$} 
\For{$t=1,2,...$ } 
    \If{$t\le 1$} \Comment{\texttt{Step 1}}
        \State{$C_t = \{x_1,x_2,...,x_n\}$}
    \Else
        \State{$C_t = \{pq \mid p\in F_1, q\in F_{t-1}\}$}
    \EndIf
    \State{$C_t = C_t - FF(X)^{\dagger}C_t(X)$}
    \State{$\mathfrak{N}(C_t) = \texttt{NormlizationMatrix}(C_t, \mathfrak{n})$}
    \State{$G_t, F_t = \texttt{BasisConstruction}(C_t, \mathfrak{N}(C_t), X, \epsilon)$} 
    \State{$G, F = G \cup G_t, F \cup F_t$} 
    \If{$F_t = \emptyset$} 
        \State{\textbf{terminate}}
    \EndIf
\EndFor
\end{algorithmic} 
\end{algorithm}

\begin{algorithm}[t]\caption{\texttt{NormalizationMatrix}}\label{alg:nm}

\begin{algorithmic}[1]
\Require{ $C_t, \mathfrak{n}$ }  \Comment{$C_t = \{c_1,c_2,...,c_{|C_t|}\}$}
\Ensure{$\mathfrak{N}(C_t)$}
    \State{$\mathfrak{n}(C_t) = \left(\begin{array}{cccc}
        \mathfrak{n}(c_1) & \mathfrak{n}(c_2) & \cdots & \mathfrak{n}(c_{|C_t|})
    \end{array}\right)
    $}
    \State{$\mathfrak{N}(C_t) = \mathfrak{n}(C_t)^{\top}\mathfrak{n}(C_t)$}
    \Statex\Comment{For the unnormalized case, simply return $\mathfrak{N}(C_t) = I$.}
\end{algorithmic} 
\end{algorithm}

\begin{algorithm}[t]
\caption{\texttt{BasisConstruction}}\label{alg:bc}

\begin{algorithmic}[1] 
\Require{ $C_t, \mathfrak{N}(C_t), X, \epsilon$ } 
\Ensure{$G,F$} 
    \State{$C_t(X)^{\top}C_t(X)V = \mathfrak{N}(C_t)V\Lambda$} \Comment{\texttt{Step 2}$^{\prime}$}
    \Statex \Comment{$\boldsymbol{v}_i$: the $i^{\text{th}}$ column of $V$, $\lambda_i$: the $i^{\text{th}}$ diagonal entry of $\Lambda$.}
    \State{$G = \{C_t\boldsymbol{v}_i \mid \sqrt{\lambda_i}\ge \epsilon, i=1,2,...,|C_t|\}$} \Comment{\texttt{Step 3}}
    \State{$F = \{C_t\boldsymbol{v}_i \mid \sqrt{\lambda_i}> \epsilon, i=1,2,...,|C_t|\}$}
\end{algorithmic} 
\end{algorithm}

The following theorem supports the validity of this replacement of \texttt{Step2} with \texttt{Step2}$^{\prime}$.
\begin{thm}\label{thm:basis-is-basis}\label{THM:BASIS-IS-BASIS}
Given a normalization mapping $\mathfrak{n}$, the SBC algorithm with \texttt{Step2}$^{\prime}$ satisfies Theorem~\ref{thm:basis}.
\end{thm}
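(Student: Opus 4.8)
The plan is to reduce the claim to Theorem~\ref{thm:basis} (equivalently, Theorem~5.2 of~\cite{livni2013vanishing}) by showing that the \emph{spans} and \emph{generated ideals} produced at every degree are unchanged when \texttt{Step2} is replaced by \texttt{Step2}$^{\prime}$, even though the individual basis polynomials differ. The key observation is that the four conclusions of Theorem~\ref{thm:basis} are statements about $\mathrm{span}(F^t)$, $\mathrm{span}(F)$, $\langle G^t\rangle$, and $\langle G\rangle$, none of which depends on the particular choice of generalized eigenvectors used to express $F_t$ and $G_t$ — only on which subspace of $\mathbb{R}^{|X|}$ is ``captured'' as nonvanishing versus vanishing at degree $t$. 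So the whole argument should hinge on one lemma: at each degree $t$, the pair $(\mathrm{span}(F_t),\ \mathrm{span}(G_t))$ is the same set of polynomials (and in particular, $F_t(X)$ and $G_t(X)$ span the same subspaces of $\mathbb{R}^{|X|}$) as in the original SBC.

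First I would set up the comparison carefully. In the original SBC at degree $t$ we diagonalize $A := C_t(X)^\top C_t(X)$ (an ordinary symmetric eigenproblem, i.e.\ $\mathfrak N(C_t)=I$), split the eigenvectors by whether $\sqrt{\lambda_i}\le\epsilon$, and form $G_t=C_t V_{\le}$, $F_t=C_t V_{>}$. In \texttt{Step2}$^\prime$ we instead solve $A\boldsymbol v = \lambda\,B\boldsymbol v$ with $B:=\mathfrak N(C_t)=\mathfrak n(C_t)^\top\mathfrak n(C_t)$. The crucial structural facts are: (i) $B$ is symmetric positive semidefinite, and in fact positive definite on the relevant subspace — because by the third property of a normalization mapping, $\mathfrak n(c)=\boldsymbol 0$ iff $c=0$, and since the $c_i$ may be linearly dependent as polynomials I should restrict attention to a basis of $\mathrm{span}(C_t)$, on which $\boldsymbol v\mapsto \mathfrak n(C_t)\boldsymbol v = \mathfrak n(C_t\boldsymbol v)$ is injective, hence $B\succ 0$ there; (ii) for a generalized eigenpair $(\lambda_i,\boldsymbol v_i)$ normalized so that $\boldsymbol v_i^\top B\boldsymbol v_j=\delta_{ij}$, Remark~\ref{rem:extent-of-vanishing}-style computation gives $\|(C_t\boldsymbol v_i)(X)\|^2 = \boldsymbol v_i^\top A\boldsymbol v_i = \lambda_i$, so the classification threshold $\sqrt{\lambda_i}\lessgtr\epsilon$ still exactly equals the extent of vanishing, and $\|\mathfrak n(C_t\boldsymbol v_i)\| = 1$; (iii) the generalized eigenvectors $\{\boldsymbol v_i\}$ still form a \emph{basis} of the domain, so $C_t V = \{C_t\boldsymbol v_i\}$ still spans $\mathrm{span}(C_t)$, and the $B$-orthogonality of the split implies $\mathrm{span}(C_t) = \mathrm{span}(F_t)\oplus\mathrm{span}(G_t)$ with $F_t(X)$ spanning exactly the column space of $C_t(X)$ restricted to eigenvalues above $\epsilon^2$ and $G_t(X)$ its complement — which, for $\epsilon=0$, means $G_t(X)$ spans exactly the kernel of $C_t(X)$ and $F_t(X)$ spans exactly the (nonzero) column space of $C_t(X)$, identical to the original SBC.

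Then I would run the same induction on $t$ that underlies Theorem~\ref{thm:basis}. The orthogonalization in \texttt{Step1} and the pre-candidate construction $C_t^{\mathrm{pre}}=\{pq\mid p\in F_1, q\in F_{t-1}\}$ use $F_1$ and $F_{t-1}$ only through their spans and evaluation-vector spans (the product set $\{pq\}$ generates, modulo lower-degree pieces, the same space of degree-$t$ polynomials regardless of which basis of $\mathrm{span}(F_{t-1})$ we picked — this needs a small argument: if $F_{t-1}$ and $\widetilde F_{t-1}$ span the same space then $\{pq\mid p\in F_1,q\in F_{t-1}\}$ and $\{p\widetilde q\mid p\in F_1,\widetilde q\in \widetilde F_{t-1}\}$ span the same space of polynomials, since each $\widetilde q$ is a linear combination of the $q$'s). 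Hence by induction $C_t$ (as a span) and $C_t(X)$ (as a column space) are unchanged, and by step (iii) above $\mathrm{span}(F^t),\mathrm{span}(G^t)$ — and the generated ideal $\langle G^t\rangle$, which depends only on the span of $G^t$ — coincide with those of the original SBC. Applying Theorem~\ref{thm:basis} (already established for the original SBC via VCA) to these spans yields all four bullet points verbatim. One should also note that the change $F_0=\{1\}$ instead of $F_0=\{1/\sqrt{|X|}\}$ is irrelevant since both span the constants.

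The main obstacle I anticipate is handling the possible \emph{linear dependence} of the candidate polynomials in $C_t$ cleanly — the matrix $\mathfrak N(C_t)$ can then be genuinely singular (positive semidefinite but not definite), so the generalized eigenproblem~(\ref{eq:gep}) is degenerate and ``basis of generalized eigenvectors'' needs care. The fix is to argue on a maximal linearly independent subset of $C_t$ (or equivalently to observe that $C_t\boldsymbol v$ depends only on the component of $\boldsymbol v$ in the row space, and that on that subspace both $A$ and $B$ are honest symmetric matrices with $B\succ 0$, so the simultaneous-diagonalization theorem applies and yields a $B$-orthonormal eigenbasis); any vectors in $\ker\mathfrak n(C_t)\cap\ker C_t(X)$ correspond to the zero polynomial and can be discarded. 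A secondary, lighter obstacle is spelling out that the generated-ideal statements ($g\in\langle G\rangle$, $g\in\langle G^t\rangle$) are invariant under changing $G,G^t$ to any other sets with the same span — immediate from the definition $\langle G\rangle=\{\sum_{g'\in G}h_{g'}g'\}$, since replacing $G$ by an $\mathbb{R}$-linear reparametrization does not change the set of such polynomial combinations.
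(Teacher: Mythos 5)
Your proposal is correct and follows essentially the same route as the paper's proof: an induction on the degree showing that $\mathrm{span}(C_t^{\mathrm{pre}})$, $\mathrm{span}(C_t)$, and the resulting nonvanishing/vanishing spans are unchanged when \texttt{Step2} is replaced by \texttt{Step2}$^{\prime}$, with the third property of the normalization mapping ruling out the loss of any nonzero polynomial. Your explicit treatment of a singular $\mathfrak{N}(C_t)$ (working on a maximal linearly independent subset of $C_t$) and of the invariance of $\langle G^t\rangle$ under span-preserving reparametrization makes explicit two points the paper leaves implicit, but the underlying argument is the same.
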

\begin{proof}
We prove the claim by induction with respect to degree $t$. 
Let us denote by $F_t$ and $G_t$ the basis sets obtained at degree-$t$ iteration in SBC with \texttt{Step2}. 
For the corresponding items in SBC with \texttt{Step2}$^{\prime}$, we put a tilde on the symbols such as $\widetilde{F}_t$ and $\widetilde{G}_t$. From Theorem~\ref{thm:basis}, we know that collecting $F_t$ and $G_t$ gives complete basis sets for both nonvanishing and vanishing polynomials. Here, we prove the claim by comparing $(\widetilde{F}_t, \widetilde{G}_t)$ with $(F_t, G_t)$. Specifically, we show $\mathrm{span}(F_t)=\mathrm{span}(\widetilde{F}_t)$ and $\langle G^t\rangle = \langle \widetilde{G}^t\rangle$. 
Note that $\mathrm{span}(F_t)\supset\mathrm{span}(\widetilde{F}_t)$ and $\langle G^t\rangle \supset \langle \widetilde{G}^t\rangle$ are evident because $\widetilde{F}_t$ and $\widetilde{G}_t$ are generated by assigning additional constraints on normalization in the original generation of $F_t$ and $G_t$.
Thus, the main goal is to prove the reverse inclusions 
$\mathrm{span}(F_t)\subset\mathrm{span}(\widetilde{F}_t)$ and $\langle G^t\rangle \subset \langle \widetilde{G}^t\rangle$.

At $t=1$, it is evident that $\mathrm{span}(F_1)=\mathrm{span}(\widetilde{F}_1)$ and $\langle G^1\rangle=\langle\widetilde{G}^1\rangle$. For $t\le\tau$, we assume $\mathrm{span}(F_t) = \mathrm{span}(\widetilde{F}_t)$ and $\langle G^t\rangle = \langle \widetilde{G}^t\rangle$. Then we can show $\mathrm{span}(C_{\tau+1}^{\mathrm{pre}})=\mathrm{span}(\widetilde{C}_{\tau+1}^{\mathrm{pre}})$ and $\mathrm{span}(C_{\tau+1})=\mathrm{span}(\widetilde{C}_{\tau+1})$. In fact, it is $pq\in \mathrm{span}(\widetilde{C}_{\tau+1}^{\mathrm{pre}})$ for any $pq\in C_{\tau+1}^{\mathrm{pre}}$, where $p\in F_1$ and $q\in F_{\tau}$, because $p\in\mathrm{span}(\widetilde{F}_1)$ and $q\in\mathrm{span}(\widetilde{F}_{\tau})$, and vice versa. The orthogonalization projects $\mathrm{span}(C_{\tau+1}^{\mathrm{pre}})$ to subspace $\mathrm{span}(C_{\tau+1})$, which are orthogonal to $\mathrm{span}(F^{\tau})$ in terms of the evaluation, i.e., $\mathrm{span}(C_{\tau+1}(X)) \perp \mathrm{span}(F^{\tau}(X))$. From $\mathrm{span}(C_{\tau+1}^{\mathrm{pre}})=\mathrm{span}(\widetilde{C}_{\tau+1}^{\mathrm{pre}})$ and $\mathrm{span}(F^{\tau}(X))=\mathrm{span}(\widetilde{F}^{\tau}(X))$, the orthogonalization projects $C_{\tau+1}^{\mathrm{pre}}$ and $\widetilde{C}_{\tau+1}^{\mathrm{pre}}$ into the same subspace, i.e., $\mathrm{span}(C_{\tau+1})=\mathrm{span}(\widetilde{C}_{\tau+1})$.

Next, we show $\mathrm{span}(F_{\tau+1})=\mathrm{span}(\widetilde{F}_{\tau+1})$ by showing $\mathrm{span}(F_{\tau+1})\subset \mathrm{span}(\widetilde{F}_{\tau+1})$.
 Let us consider a nonzero polynomial $f\in F_{\tau+1}$. As shown previously, $f\in\mathrm{span}(C_{\tau+1}) = \mathrm{span}(\widetilde{C}_{\tau+1})$. By construction, $f\in\mathrm{span}(\widetilde{F}_{\tau+1})$ unless $\|\mathfrak{n}(f)\|=0$. On the other hand, if $\|\mathfrak{n}(f)\|=0$, then $f=0$ because of the third requirement of $\mathfrak{n}$. 
 Therefore, $\mathrm{span}(F_{\tau+1})=\mathrm{span}(\widetilde{F}_{\tau+1})$.

We can show $\langle G^{\tau+1}\rangle = \langle \widetilde{G}^{\tau+1}\rangle$ in a similar manner. Let us consider a vanishing polynomial $g\in G_{\tau+1}$. Then $g\in \mathrm{span}(\widetilde{G}_{\tau+1})$ unless $\|\mathfrak{n}(g)\| = 0$, which implies $g=0$. Therefore, $\langle G^{\tau+1}\rangle = \langle \widetilde{G}^{\tau+1}\rangle$.
\end{proof}

An intuitive explanation of Theoreom~\ref{thm:basis-is-basis} is as follows. Let us consider two processes of SBC, one with \texttt{Step2} and the other with \texttt{Step2}$^{\prime}$. If a symbol $A$ is used for the former process, we put a tilde on it as $\widetilde{A}$ for the latter process. Now, at degree $t$, SBC with \texttt{Step2}$^{\prime}$ finds basis sets $\widetilde{F}_t$ and $\widetilde{G}_t$. As a consequence, the basis sets $\widetilde{F}_t$ and $\widetilde{G}^t$ cover parts of the space that is covered by the basis sets $F_t$ and $G^t$ obtained in the process of SBC with \texttt{Step2}. This is because \texttt{Step2}$^{\prime}$ solves the generalized eigenvalue problem~(\ref{eq:gep}), where additional constraints regarding normalization are imposed on the eigenvalue problem~(\ref{eq:evp}) that  is solved in \texttt{Step2}. The key claim is that basis polynomials dropped from $F_t$ and $G_t$ are redundant basis polynomials.
The redundant basis polynomials (let one of them be $h$) are those with their norm to be zero (i.e., $\|\mathfrak{n}(h)\|=0$). From the third property of Definition~\ref{def:normalization-operator}, $\|\mathfrak{n}(h)\|=0$ implies that $h$ is the zero  polynomial; thus, $h$ need not be included in $F_t$ nor $G_t$.
Note that existing methods do not exclude even such a zero polynomial from basis sets. 
The following theorem shows the optimality of \texttt{Step2}$^{\prime}$. 
\begin{thm}\label{thm:gep}\label{THM:GEP}
Let $r$ be an integer such that $1\le r\le\mathrm{rank}(\mathfrak{N}(C_t))$. The $r$ generalized eigenvectors $\boldsymbol{v}_1,\boldsymbol{v}_2,...,\boldsymbol{v}_r$ of~(\ref{eq:gep}), which correspond to the $r$-smallest generalized eigenvalues,
generate polynomials $C_t\boldsymbol{v}_1,C_t\boldsymbol{v}_2,...,C_t\boldsymbol{v}_r$, whose square sum of the extent of vanishing achieves the minimum under the orthonormal constraint on the normalization components of polynomials.
\end{thm}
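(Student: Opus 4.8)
The plan is to read Theorem~\ref{thm:gep} as a trace-minimization problem of Ky Fan type for the (possibly singular) generalized eigenvalue problem~(\ref{eq:gep}). Write $A=C_t(X)^{\top}C_t(X)$ and $B=\mathfrak{N}(C_t)$, both symmetric positive semidefinite, and stack the sought combination vectors into $V=(\boldsymbol{v}_1\ \cdots\ \boldsymbol{v}_r)$. By Remark~\ref{rem:extent-of-vanishing} the square sum of the extents of vanishing of $C_t\boldsymbol{v}_1,\ldots,C_t\boldsymbol{v}_r$ equals $\sum_{i=1}^{r}\boldsymbol{v}_i^{\top}A\boldsymbol{v}_i=\mathrm{tr}(V^{\top}AV)$, while the identity $\langle\mathfrak{n}(C_t\boldsymbol{v}_k),\mathfrak{n}(C_t\boldsymbol{v}_l)\rangle=\boldsymbol{v}_k^{\top}\mathfrak{N}(C_t)\boldsymbol{v}_l$ derived just before the theorem turns the orthonormality of the normalization components into the matrix constraint $V^{\top}BV=I_r$. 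Thus it suffices to show that $\min\{\,\mathrm{tr}(V^{\top}AV):V^{\top}BV=I_r\,\}$ is attained at the generalized eigenvectors for the $r$ smallest generalized eigenvalues and equals $\lambda_1+\cdots+\lambda_r$.

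The key structural observation, which I would prove first, is $\mathrm{null}(B)\subseteq\mathrm{null}(A)$, equivalently $\mathrm{range}(A)\subseteq\mathrm{range}(B)$. Indeed, if $B\boldsymbol{v}=\boldsymbol{0}$ then $\mathfrak{n}(C_t)\boldsymbol{v}=\boldsymbol{0}$, that is $\mathfrak{n}(C_t\boldsymbol{v})=\boldsymbol{0}$ by linearity of $\mathfrak{n}$, so $C_t\boldsymbol{v}$ is the zero polynomial by the third requirement of Definition~\ref{def:normalization-operator}; hence $(C_t\boldsymbol{v})(X)=\boldsymbol{0}$ and $A\boldsymbol{v}=\boldsymbol{0}$. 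Splitting each column of a feasible $V$ into its $\mathrm{range}(B)$ component and its $\mathrm{null}(B)$ component then shows that both $\mathrm{tr}(V^{\top}AV)$ and the constraint $V^{\top}BV=I_r$ depend only on the $\mathrm{range}(B)$ part, so one may assume all columns of $V$ lie in the $d$-dimensional space $\mathrm{range}(B)$, where $d:=\mathrm{rank}(B)$; feasibility of $V^{\top}BV=I_r$ then forces $r\le d$, exactly the hypothesis.

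On $\mathrm{range}(B)$ the matrix $B$ is positive definite, so I would change variables by $\boldsymbol{v}_i=(B^{1/2})^{\dagger}\boldsymbol{w}_i$ with $\boldsymbol{w}_i\in\mathrm{range}(B)$, where $B^{1/2}$ is the symmetric square root of $B$ (acting as zero on $\mathrm{null}(B)$) and $\cdot^{\dagger}$ is the pseudoinverse. Since $B^{1/2}$ restricts to a bijection of $\mathrm{range}(B)$, this substitution is reversible and gives $\boldsymbol{v}_i^{\top}B\boldsymbol{v}_j=\boldsymbol{w}_i^{\top}\boldsymbol{w}_j$ and, using $\mathrm{range}(A)\subseteq\mathrm{range}(B)$, $\boldsymbol{v}_i^{\top}A\boldsymbol{v}_j=\boldsymbol{w}_i^{\top}M\boldsymbol{w}_j$ for the symmetric positive semidefinite $M:=(B^{1/2})^{\dagger}A(B^{1/2})^{\dagger}$ viewed as a $d\times d$ operator on $\mathrm{range}(B)$. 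The problem becomes $\min\{\,\mathrm{tr}(W^{\top}MW):W^{\top}W=I_r,\ \text{columns of }W\text{ in }\mathrm{range}(B)\,\}$, which is the classical Ky Fan trace-minimization principle: the minimum equals $\mu_1+\cdots+\mu_r$, the sum of the $r$ smallest eigenvalues of $M$, attained by the corresponding orthonormal eigenvectors $\boldsymbol{w}_1,\ldots,\boldsymbol{w}_r$. I would close the loop by checking that $(\mu,\boldsymbol{w})$ with $\boldsymbol{w}\in\mathrm{range}(B)$ is an eigenpair of $M$ if and only if $(\mu,(B^{1/2})^{\dagger}\boldsymbol{w})$ solves~(\ref{eq:gep}) — multiply $M\boldsymbol{w}=\mu\boldsymbol{w}$ by $B^{1/2}$ and use $B^{1/2}(B^{1/2})^{\dagger}=P$, the orthogonal projector onto $\mathrm{range}(B)$, together with $PA=A$ — so the $\mu_i$ are precisely the finite generalized eigenvalues $\lambda_i$ and the optimal $\boldsymbol{v}_i$ are the promised generalized eigenvectors.

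The main obstacle I anticipate is the bookkeeping forced by the rank deficiency of $\mathfrak{N}(C_t)$: proving $\mathrm{null}(B)\subseteq\mathrm{null}(A)$, carrying out the reduction to $\mathrm{range}(B)$ without dropping feasible directions, and matching the eigenpairs of the reduced standard problem with the generalized eigenpairs of~(\ref{eq:gep}) (the $\mathrm{null}(B)$ directions carry no well-defined finite generalized eigenvalue, which is exactly why the hypothesis restricts $r$ to $\mathrm{rank}(\mathfrak{N}(C_t))$). Once the null-space inclusion — which rests entirely on the third defining property of a normalization mapping — is secured, what remains is the standard change of variables together with Ky Fan's trace inequality.
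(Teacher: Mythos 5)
Your proof is correct, but it follows a genuinely different route from the paper. The paper argues via Lagrange multipliers (following Vidal et al.): it introduces a symmetric multiplier $\widetilde{\Lambda}$ for the constraint $V^{\top}\mathfrak{N}(C_t)V=I$, derives the stationarity condition $C_t(X)^{\top}C_t(X)\widetilde{V}=\mathfrak{N}(C_t)\widetilde{V}\widetilde{\Lambda}$, diagonalizes $\widetilde{\Lambda}=R\Lambda R^{\top}$ to recover the generalized eigenvalue problem~(\ref{eq:gep}), and then reads off that the cost equals $\mathrm{Tr}[\Lambda]$, so the $r$ smallest generalized eigenvalues minimize it. That derivation is short and motivates the form of~(\ref{eq:gep}), but it only establishes first-order necessary conditions and says nothing about the rank deficiency of $\mathfrak{N}(C_t)$; the step from ``stationary points satisfy~(\ref{eq:gep})'' to ``the global minimum is attained at the $r$ smallest'' is asserted rather than argued, and attainment is not obvious when the feasible set $\{V:V^{\top}\mathfrak{N}(C_t)V=I\}$ is unbounded. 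Your reduction --- proving $\mathrm{null}(\mathfrak{N}(C_t))\subseteq\mathrm{null}(C_t(X)^{\top}C_t(X))$ from the third property of Definition~\ref{def:normalization-operator}, restricting to $\mathrm{range}(\mathfrak{N}(C_t))$, substituting through $(B^{1/2})^{\dagger}$, and invoking Ky Fan's trace-minimization principle --- delivers global optimality rigorously, makes explicit why the hypothesis $r\le\mathrm{rank}(\mathfrak{N}(C_t))$ is exactly the right one, and correctly matches the eigenpairs of the reduced problem with the finite generalized eigenpairs of~(\ref{eq:gep}). Notably, the null-space inclusion you prove up front is the same fact the paper defers to the proof of Theorem~\ref{thm:stable-gep} and Lemma~\ref{lem:perturbation}; your proof surfaces it where it is first needed. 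The trade-off is length: the paper's argument is a few lines, yours requires the pseudoinverse bookkeeping, but it closes the gaps the paper leaves open.
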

\begin{proof}
From Remark~\ref{rem:extent-of-vanishing} and the discussion of deriving~(\ref{eq:gep}), it can be readily shown that the following problem needs to be solved. 
\begin{align*}
\min_{V\in\mathbb{R}^{|C_t|\times r}}\  & \mathrm{Tr}\left(V^{\top}C_{t}(X)^{\top}C_{t}(X)V\right),\\
\mathrm{s.t.}\  & V^{\top}\mathfrak{N}(C_t)V=I.
\end{align*}
It is known that the column vectors of the optimal $V$ of the above problem are generalized eigenvectors corresponding to the $r$-smallest generalized eigenvalues of $(C_t(X)^{\top}C_t(X), \mathfrak{N}(C_t))$. The following proof is based on~\cite{vidal2016gpca}.

Introducing a Lagrange multiplier $\widetilde{\Lambda}\in\mathbb{R}^{|C_{t}|\times|C_{t}|}$, we have
\begin{align*}
\mathcal{L} & =\frac{1}{2}\mathrm{Tr}\left[V^{\top}C_{t}(X)^{\top}C_{t}(X)V\right] +\frac{1}{2}\mathrm{Tr}\left[(I-V^{\top}\mathfrak{N}(C_t)V)\widetilde{\Lambda}\right].
\end{align*}
Note that here $\widetilde{\Lambda}$ is symmetric due to the symmetric
constraint, but not diagonal in general. By differentiating $\mathcal{L}$
with $V$ and setting it to zero, 
\begin{align*}
\left.\frac{\partial L}{\partial V}\right|_{V=\widetilde{V}} & =C_{t}(X)^{\top}C_{t}(X)\widetilde{V}-\mathfrak{N}(C_t)\widetilde{V}\widetilde{\Lambda}=0.
\end{align*}
Thus, we obtain
\begin{align*}
C_{t}(X)^{\top}C_{t}(X)\widetilde{V} & =\mathfrak{N}(C_t)\widetilde{V}\widetilde{\Lambda}.
\end{align*}
Note that this is not yet a generalized eigenvalue problem because
$\widetilde{\Lambda}$ is not diagonal. Because $\widetilde{\Lambda}$ is symmetric,
it has an eigenvalue decomposition $\widetilde{\Lambda}=R\Lambda R^{\top}$
for some orthonormal matrix $R$ and diagonal matrix $\Lambda$. Thus,
we obtain the generalized eigenvalue problem~(\ref{eq:gep}) by $V=\widetilde{V}R$.
The cost function is 
\begin{align*}
\mathrm{Tr}\left[V^{\top}C_{t}(X)^{\top}C_{t}(X)V\right] & =\mathrm{Tr}\left[V\mathfrak{N}(C_t)V\Lambda\right]=\mathrm{Tr}\left[\Lambda\right],
\end{align*}
which means that the $r$ smallest generalized eigenvalues minimize the cost.
\end{proof}

It is known that in practice, we need to solve the following problem instead of~(\ref{eq:gep}) for numerical stability. 
\begin{align}
C_{t}(X)^{\top}C_{t}(X)V & =(\mathfrak{N}(C_t)+\alpha I)V\Lambda,\label{eq:gep-stable}
\end{align}
where $\alpha$ is a small positive constant. Such $\alpha$ is typically set to a small multiple of the average eigenvalue of $\mathfrak{N}(C_t)$, i.e., $\mathrm{Tr}(\mathfrak{N}(C_t))/|C_{t}|$~\cite{friedman1989regularized}. 
It can be shown that such an addition by $\alpha I$ only gives a slight change both on the extent of vanishing and the normalization of obtained polynomials.  

\begin{thm}
Let $\{\boldsymbol{v}_{1}^{\alpha} , \boldsymbol{v}_{2}^{\alpha},\ldots , \boldsymbol{v}_{|C_t|}^{\alpha}\}$ be the generalized eigenvectors of~(\ref{eq:gep-stable}) for
$\alpha \ge 0$. Both the extent of vanishing and the norm of $C_{t}\boldsymbol{v}_{k}^{\alpha}$ differ only by $O(\alpha)$
from those of $C_{t}\boldsymbol{v}_{i}^0$. Specifically,
\begin{align*}
    \lambda_k^{0}-\lambda_k^{\alpha} &= \frac{\lambda_k^{0}}{1+\alpha\|\boldsymbol{v}_k^0\|^2},\\
    -\alpha\|\boldsymbol{v}_k^0\|^2\frac{\lambda_k^{0}}{\lambda_{\mathrm{min}}^0}  + O(\alpha^2) 
    &\le \sqrt{(\boldsymbol{v}_k^{0})^{\top}\mathfrak{N}(C_t)\boldsymbol{v}_k^{0}}\\
    &\quad - \sqrt{(\boldsymbol{v}_k^{\alpha})^{\top}\mathfrak{N}(C_t)\boldsymbol{v}_k^{\alpha}}, \\
    &\le -\alpha\|\boldsymbol{v}_k^0\|^2\frac{\lambda_k^{0}}{\lambda_{\mathrm{max}}^0}  + O(\alpha^2), 
\end{align*}
where $\lambda_{\mathrm{min}}^0$ and $\lambda_{\mathrm{max}}^0$ are the smallest and the largest eigenvalues of $\lambda_{k}^{0}$ for $k=1,...,|C_t|$, respectively.
\label{thm:stable-gep}
\label{THM:STABLE-GEP}
\end{thm}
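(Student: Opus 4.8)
The plan is to treat the two generalized eigenvalue problems~(\ref{eq:gep}) and~(\ref{eq:gep-stable}) as a single parametrized family and compare the solutions. The crucial observation is that the pencils $(C_t(X)^{\top}C_t(X),\mathfrak{N}(C_t))$ and $(C_t(X)^{\top}C_t(X),\mathfrak{N}(C_t)+\alpha I)$ can be made to share the \emph{same} generalized eigenvectors. Indeed, if $\boldsymbol{v}_k^0$ satisfies $C_t(X)^{\top}C_t(X)\boldsymbol{v}_k^0=\lambda_k^0\,\mathfrak{N}(C_t)\boldsymbol{v}_k^0$, then adding $\alpha\lambda_k^0\boldsymbol{v}_k^0$ to both sides gives $C_t(X)^{\top}C_t(X)\boldsymbol{v}_k^0+\alpha\lambda_k^0\boldsymbol{v}_k^0=\lambda_k^0(\mathfrak{N}(C_t)+\alpha I)\boldsymbol{v}_k^0$. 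This is not yet of the desired form because of the spurious term on the left; however, what this computation does show is that if we use the simultaneous-diagonalization normalization $(\boldsymbol{v}_k^0)^{\top}\mathfrak{N}(C_t)\boldsymbol{v}_l^0=\delta_{kl}$, then in that same basis $C_t(X)^{\top}C_t(X)$ is diagonal with entries $\lambda_k^0$, and $\mathfrak{N}(C_t)+\alpha I$ is diagonal with entries $1+\alpha\|\boldsymbol{v}_k^0\|^2$ (using $(\boldsymbol{v}_k^0)^{\top}\boldsymbol{v}_k^0=\|\boldsymbol{v}_k^0\|^2$). Hence $\boldsymbol{v}_k^0$ is itself a generalized eigenvector of~(\ref{eq:gep-stable}) — i.e. one may take $\boldsymbol{v}_k^\alpha$ proportional to $\boldsymbol{v}_k^0$ — with generalized eigenvalue $\lambda_k^\alpha=\lambda_k^0/(1+\alpha\|\boldsymbol{v}_k^0\|^2)$. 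This instantly yields the first displayed identity, since $\lambda_k^0-\lambda_k^\alpha=\lambda_k^0\bigl(1-\tfrac{1}{1+\alpha\|\boldsymbol{v}_k^0\|^2}\bigr)=\lambda_k^0\cdot\tfrac{\alpha\|\boldsymbol{v}_k^0\|^2}{1+\alpha\|\boldsymbol{v}_k^0\|^2}$; note this is $O(\alpha)$ and in fact the paper's stated form $\lambda_k^0/(1+\alpha\|\boldsymbol{v}_k^0\|^2)$ appears to be missing a factor $\alpha\|\boldsymbol{v}_k^0\|^2$ in the numerator, which I would correct (or, if forced to match verbatim, I would flag the normalization convention under which it holds).

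Next I would pin down the normalization of $\boldsymbol{v}_k^\alpha$ as actually used by the algorithm, namely $(\boldsymbol{v}_k^\alpha)^{\top}(\mathfrak{N}(C_t)+\alpha I)\boldsymbol{v}_k^\alpha=1$, and compute the quantity of interest, $\sqrt{(\boldsymbol{v}_k^\alpha)^{\top}\mathfrak{N}(C_t)\boldsymbol{v}_k^\alpha}$, which measures the $\mathfrak{n}$-norm of the generated polynomial. Writing $\boldsymbol{v}_k^\alpha=c_k\boldsymbol{v}_k^0$ with $c_k$ chosen so that $c_k^2\bigl((\boldsymbol{v}_k^0)^{\top}\mathfrak{N}(C_t)\boldsymbol{v}_k^0+\alpha\|\boldsymbol{v}_k^0\|^2\bigr)=1$, and using the $\mathfrak{N}(C_t)$-normalization $(\boldsymbol{v}_k^0)^{\top}\mathfrak{N}(C_t)\boldsymbol{v}_k^0=1$, we get $c_k^2=1/(1+\alpha\|\boldsymbol{v}_k^0\|^2)$, hence $(\boldsymbol{v}_k^\alpha)^{\top}\mathfrak{N}(C_t)\boldsymbol{v}_k^\alpha=c_k^2=1/(1+\alpha\|\boldsymbol{v}_k^0\|^2)$. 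Therefore
\begin{align*}
\sqrt{(\boldsymbol{v}_k^0)^{\top}\mathfrak{N}(C_t)\boldsymbol{v}_k^0}-\sqrt{(\boldsymbol{v}_k^\alpha)^{\top}\mathfrak{N}(C_t)\boldsymbol{v}_k^\alpha}
=1-\frac{1}{\sqrt{1+\alpha\|\boldsymbol{v}_k^0\|^2}}
=\tfrac{1}{2}\alpha\|\boldsymbol{v}_k^0\|^2+O(\alpha^2).
\end{align*}
To obtain the claimed two-sided bound involving $\lambda_k^0/\lambda_{\max}^0$ and $\lambda_k^0/\lambda_{\min}^0$, I would feed in the crude estimate relating $\|\boldsymbol{v}_k^0\|^2$ to $\lambda_k^0$: from $C_t(X)^{\top}C_t(X)\boldsymbol{v}_k^0=\lambda_k^0\mathfrak{N}(C_t)\boldsymbol{v}_k^0$ and $(\boldsymbol{v}_k^0)^{\top}\mathfrak{N}(C_t)\boldsymbol{v}_k^0=1$ one has $(\boldsymbol{v}_k^0)^{\top}C_t(X)^{\top}C_t(X)\boldsymbol{v}_k^0=\lambda_k^0$; combining with the eigenvalue bounds for $C_t(X)^{\top}C_t(X)$ on the unit-$\mathfrak{N}$-sphere gives $\lambda_k^0/\lambda_{\max}^0\le \text{(the relevant Rayleigh-type ratio)}\le \lambda_k^0/\lambda_{\min}^0$, which I would insert to convert the $\tfrac{1}{2}\alpha\|\boldsymbol{v}_k^0\|^2$ into the stated envelope (up to the constant $\tfrac12$, which I would again reconcile with the paper's constant).

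The main obstacle is bookkeeping rather than conceptual: the delicate point is that the \emph{eigenvectors do not change at all} under the $\alpha I$ shift, so everything reduces to scalar arithmetic in the simultaneously-diagonalizing basis, but one must be scrupulous about which normalization convention ($\mathfrak{N}(C_t)$-orthonormal vs. $(\mathfrak{N}(C_t)+\alpha I)$-orthonormal) is in force at each step, since the two differ precisely by the factors $1+\alpha\|\boldsymbol{v}_k^0\|^2$ that drive the whole estimate; a sign or a square-root placed on the wrong side flips the bound. I would also need to verify the degenerate cases — $\mathfrak{N}(C_t)$ singular (so $\lambda_k^0=\infty$ for some $k$, corresponding to zero polynomials, which by Theorem~\ref{thm:basis-is-basis} are harmless and can be excluded) and $\lambda_{\min}^0=0$ (genuine vanishing polynomials, where the lower bound's $O(\alpha^2)$ term must absorb the blow-up) — and state the result for the non-degenerate $k$, matching the paper's implicit assumption that $\lambda_{\min}^0$ in the denominator refers to the smallest \emph{nonzero} eigenvalue. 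Finally I would expand $1-(1+\alpha\|\boldsymbol{v}_k^0\|^2)^{-1/2}$ and $\lambda_k^0\alpha\|\boldsymbol{v}_k^0\|^2(1+\alpha\|\boldsymbol{v}_k^0\|^2)^{-1}$ to first order in $\alpha$ to confirm both differences are genuinely $O(\alpha)$, completing the proof.
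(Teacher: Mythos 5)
Your central claim --- that the $\alpha I$ shift leaves the generalized eigenvectors unchanged, so that one may take $\boldsymbol{v}_k^{\alpha}\propto\boldsymbol{v}_k^{0}$ --- is false in general, and the rest of the argument collapses with it. In the $\mathfrak{N}(C_t)$-orthonormal basis $V_0$ one has $V_0^{\top}(\mathfrak{N}(C_t)+\alpha I)V_0=I+\alpha V_0^{\top}V_0$, and $V_0^{\top}V_0$ is \emph{not} diagonal: generalized eigenvectors of a pencil $(A,B)$ are $B$-orthogonal and $A$-orthogonal but not Euclidean-orthogonal in general (a $2\times2$ example with $A=\mathrm{diag}(1,2)$ and $B$ having off-diagonal entries already breaks it). Equivalently, $A\boldsymbol{v}_k^0=\mu(B+\alpha I)\boldsymbol{v}_k^0$ would force $(\lambda_k^0-\mu)B\boldsymbol{v}_k^0=\mu\alpha\boldsymbol{v}_k^0$, i.e.\ $\boldsymbol{v}_k^0$ would have to be an eigenvector of $B=\mathfrak{N}(C_t)$ itself. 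The paper's Lemma~\ref{lem:perturbation} avoids this by expanding $\boldsymbol{v}_k^{\alpha}$ over the full basis $\{\boldsymbol{v}_1^0,\dots,\boldsymbol{v}_{\mathrm{rank}(B)}^0,\boldsymbol{u}_{\mathrm{rank}(B)+1},\dots,\boldsymbol{u}_n\}$ with coefficients $a_i^{\alpha}$ (using $\mathrm{nullspace}(A)\supset\mathrm{nullspace}(B)$ to control the nullspace components --- a structural point you also skip), projecting onto $\boldsymbol{v}_k^0$ to get the eigenvalue formula, and then evaluating $(\boldsymbol{v}_k^{\alpha})^{\top}\mathfrak{N}(C_t)\boldsymbol{v}_k^{\alpha}=\sum_i(a_i^{\alpha})^2$.

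The two-sided bound is where the gap bites hardest. The factors $\lambda_k^0/\lambda_{\mathrm{min}}^0$ and $\lambda_k^0/\lambda_{\mathrm{max}}^0$ do not come from a Rayleigh-type ratio relating $\|\boldsymbol{v}_k^0\|^2$ to $\lambda_k^0$, as you propose; they arise because the identity one can actually establish is $\sum_i\bigl((a_i^{\alpha})^2-\delta_{ik}\bigr)\lambda_i^0=-\alpha\lambda_k^0\|\boldsymbol{v}_k^0\|^2+O(\alpha^2)$, so the (negative) perturbation budget is spread across eigendirections \emph{weighted by their eigenvalues}, and not knowing how it is spread one can only sandwich $\sum_i\bigl((a_i^{\alpha})^2-\delta_{ik}\bigr)$ between that budget divided by $\lambda_{\mathrm{max}}^0$ and by $\lambda_{\mathrm{min}}^0$. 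Your route forces the entire change into the single coefficient $c_k$, yielding one specific number ($\tfrac12\alpha\|\boldsymbol{v}_k^0\|^2$ after the square root) with no legitimate way to recover the stated envelope. Your side observation about the first displayed identity is correct --- the paper's own lemma derives $\lambda_k^{\alpha}=\lambda_k^0/(1+\alpha\|\boldsymbol{v}_k^0\|^2)$, so the theorem's left-hand side should read $\lambda_k^{\alpha}$ rather than $\lambda_k^0-\lambda_k^{\alpha}$ --- but your derivation of that formula still rests on the invalid eigenvector-invariance premise.
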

The proof of Theorem~\ref{thm:stable-gep} relies on the following lemma. 
\begin{lem}\label{lem:perturbation}
Suppose square matrices $A,B\in\mathbb{R}^{n\times n}$ are symmetric and positive-semidefinite, and $\mathrm{nullspace}(A)\supset \mathrm{nullspace}(B)$. 
Let us consider a perturbed generalized eigenvalue problem $A\boldsymbol{v}_k^{\alpha} = \lambda_k^{\alpha}(B+\alpha I)\boldsymbol{v}_k^{\alpha}$ for a small nonnegative constant $\alpha$, ($k=1,...,\mathrm{rank}(B)$). 
Then
\begin{align*}
    \lambda_k^{0}-\lambda_k^{\alpha} &= \frac{\lambda_k^{0}}{1+\alpha\|\boldsymbol{v}_k^0\|^2},\\
    -\alpha\|\boldsymbol{v}_k^0\|^2\frac{\lambda_k^{0}}{\lambda_{\mathrm{min}}^{0}}  + O(\alpha^2) 
    &\le \|W\boldsymbol{v}_k^{0}\| - \|W\boldsymbol{v}_k^{\alpha}\| \\ 
    &\le -\alpha\lambda_k^{0}\|\boldsymbol{v}_k^0\|^2\frac{\lambda_k^{0}}{\lambda_{\mathrm{max}}^{0}}  + O(\alpha^2), 
\end{align*}
where $\lambda_{\mathrm{min}}^0$ and $\lambda_{\mathrm{max}}^0$ are the smallest and the largest generalized eigenvalue among $\lambda_1^{0}, ..., \lambda_{\mathrm{rank}(B)}^{0}$, respectively.
\end{lem}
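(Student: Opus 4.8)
The plan is to reduce the perturbed generalized eigenvalue problem to a standard eigenvalue problem on the range of $B$, where the perturbation by $\alpha I$ becomes a clean multiplicative correction. First I would restrict attention to $\mathrm{range}(B)$: since $\mathrm{nullspace}(A)\supset\mathrm{nullspace}(B)$, any eigenvector associated with a nonzero generalized eigenvalue lies in $\mathrm{range}(B)$, so work on the $\mathrm{rank}(B)$-dimensional subspace where $B$ is positive-definite. There I would write $B = B^{1/2}B^{1/2}$ with $B^{1/2}$ invertible on this subspace, substitute $\boldsymbol{u} = B^{1/2}\boldsymbol{v}$, and transform $A\boldsymbol{v} = \lambda(B+\alpha I)\boldsymbol{v}$ into $B^{-1/2}AB^{-1/2}\boldsymbol{u} = \lambda(I + \alpha B^{-1})\boldsymbol{u}$, i.e. a standard eigenvalue problem $\widetilde{A}\boldsymbol{u} = \lambda(I + \alpha C)\boldsymbol{u}$ with $\widetilde{A} = B^{-1/2}AB^{-1/2}$ and $C = B^{-1}$ both symmetric positive-(semi)definite.

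Next I would exploit the key algebraic coincidence behind the first identity: the eigenvector $\boldsymbol{v}_k^{\alpha}$ of the perturbed problem is in fact the \emph{same} direction as $\boldsymbol{v}_k^0$. Indeed, $A\boldsymbol{v} = \lambda(B+\alpha I)\boldsymbol{v}$ rearranges to $A\boldsymbol{v} = (\lambda + \lambda\alpha)B\cdot(\text{something})$... more carefully: if $A\boldsymbol{v}_k^0 = \lambda_k^0 B\boldsymbol{v}_k^0$, then $A\boldsymbol{v}_k^0 = \lambda_k^0 B \boldsymbol{v}_k^0 = \lambda_k^0(B+\alpha I)\boldsymbol{v}_k^0 - \lambda_k^0\alpha\boldsymbol{v}_k^0$. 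This is not immediately of the desired form, so instead I would note that the problem $A\boldsymbol{v} = \lambda(B + \alpha I)\boldsymbol{v}$ and $A\boldsymbol{v} = \lambda' B\boldsymbol{v}$ share eigenvectors precisely when $\boldsymbol{v}$ is a common eigenvector of $B^{-1}A$ and of $B^{-1}(B+\alpha I) = I + \alpha B^{-1}$; the latter holds iff $\boldsymbol{v}$ is an eigenvector of $B^{-1}$. So the clean statement needs the normalization that ties $\|\boldsymbol{v}_k^0\|$ into the picture: one picks the $B$-orthonormal eigenvectors $(\boldsymbol{v}_k^0)^\top B \boldsymbol{v}_l^0 = \delta_{kl}$, and then checks directly that $\boldsymbol{v}_k^{\alpha} = \boldsymbol{v}_k^0/\sqrt{1+\alpha\|\boldsymbol{v}_k^0\|^2}$ solves the perturbed problem with $\lambda_k^\alpha = \lambda_k^0/(1+\alpha\|\boldsymbol{v}_k^0\|^2)$ — substituting gives $A\boldsymbol{v}_k^0 = \lambda_k^0 B\boldsymbol{v}_k^0$ on the left and $\lambda_k^\alpha(B+\alpha I)\boldsymbol{v}_k^0 = \frac{\lambda_k^0}{1+\alpha\|\boldsymbol{v}_k^0\|^2}(B\boldsymbol{v}_k^0 + \alpha\boldsymbol{v}_k^0)$ on the right, which match iff $B\boldsymbol{v}_k^0 + \alpha\boldsymbol{v}_k^0 = (1+\alpha\|\boldsymbol{v}_k^0\|^2)B\boldsymbol{v}_k^0$, i.e. $\alpha\boldsymbol{v}_k^0 = \alpha\|\boldsymbol{v}_k^0\|^2 B\boldsymbol{v}_k^0$ — so actually the claim forces $\boldsymbol{v}_k^0$ to be an eigenvector of $B$, which is the additional structure I will have to extract from the simultaneous-diagonalization setup rather than assume. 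I expect this alignment-of-eigenvectors bookkeeping to be the main obstacle: getting the normalizations of $\boldsymbol{v}_k^0$ versus $\boldsymbol{v}_k^\alpha$ consistent so that the first displayed identity is literally true requires fixing, once and for all, that eigenvectors are normalized by $\boldsymbol{v}^\top(B+\alpha I)\boldsymbol{v} = 1$, and then carefully tracking how $\|\boldsymbol{v}_k^0\|$ (the plain Euclidean norm) enters.

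Once $\lambda_k^\alpha = \lambda_k^0/(1+\alpha\|\boldsymbol{v}_k^0\|^2)$ and $\boldsymbol{v}_k^\alpha = \boldsymbol{v}_k^0/\sqrt{1+\alpha\|\boldsymbol{v}_k^0\|^2}$ are established, the first identity $\lambda_k^0 - \lambda_k^\alpha = \lambda_k^0 - \lambda_k^0/(1+\alpha\|\boldsymbol{v}_k^0\|^2)$ — wait, that is $\lambda_k^0\cdot\frac{\alpha\|\boldsymbol{v}_k^0\|^2}{1+\alpha\|\boldsymbol{v}_k^0\|^2}$, not $\frac{\lambda_k^0}{1+\alpha\|\boldsymbol{v}_k^0\|^2}$, so I would re-examine whether the intended normalization is $\boldsymbol{v}^\top B\boldsymbol{v}=1$ for the unperturbed and $\boldsymbol{v}^\top(B+\alpha I)\boldsymbol{v}=1$ for the perturbed, in which case a short recomputation pins down the exact constant claimed; this is a routine but delicate algebra step I would carry out explicitly. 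For the $W$-norm bounds, I would write $\|W\boldsymbol{v}_k^\alpha\| = \|W\boldsymbol{v}_k^0\|/\sqrt{1+\alpha\|\boldsymbol{v}_k^0\|^2}$ (since $\boldsymbol{v}_k^\alpha$ is a scalar multiple of $\boldsymbol{v}_k^0$), so that $\|W\boldsymbol{v}_k^0\| - \|W\boldsymbol{v}_k^\alpha\| = \|W\boldsymbol{v}_k^0\|\bigl(1 - (1+\alpha\|\boldsymbol{v}_k^0\|^2)^{-1/2}\bigr) = \|W\boldsymbol{v}_k^0\|\cdot\bigl(\tfrac{1}{2}\alpha\|\boldsymbol{v}_k^0\|^2 + O(\alpha^2)\bigr)$, and then bound $\|W\boldsymbol{v}_k^0\|$ from above and below using $\|W\boldsymbol{v}_k^0\|^2 = (\boldsymbol{v}_k^0)^\top W^\top W\boldsymbol{v}_k^0$ together with the identification $W^\top W = A$ (so $\|W\boldsymbol{v}_k^0\|^2 = \lambda_k^0(\boldsymbol{v}_k^0)^\top B\boldsymbol{v}_k^0 = \lambda_k^0$ under the $B$-orthonormalization), and the two-sided estimate $\lambda_{\min}^0 \le \lambda_k^0 \le \lambda_{\max}^0$; relating $\|W\boldsymbol{v}_k^0\|$ (or $\sqrt{(\boldsymbol{v}_k^0)^\top\mathfrak{N}(C_t)\boldsymbol{v}_k^0}$ in the application) to $\lambda_k^0$ and $\|\boldsymbol{v}_k^0\|^2$ via the eigen-relation is what produces the ratio $\lambda_k^0/\lambda_{\min}^0$ and $\lambda_k^0/\lambda_{\max}^0$ in the final bounds. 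Finally, Theorem~\ref{thm:stable-gep} follows by applying Lemma~\ref{lem:perturbation} with $A = C_t(X)^\top C_t(X)$, $B = \mathfrak{N}(C_t)$, and $W$ such that $W^\top W = C_t(X)^\top C_t(X)$, after checking the hypothesis $\mathrm{nullspace}(C_t(X)^\top C_t(X)) \supset \mathrm{nullspace}(\mathfrak{N}(C_t))$ — which holds because $\mathfrak{n}(h) = 0$ forces $h = 0$ hence $h(X) = 0$, by the third property of Definition~\ref{def:normalization-operator}.
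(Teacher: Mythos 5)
Your proposal rests on the ansatz that $\boldsymbol{v}_k^{\alpha}$ is a scalar multiple of $\boldsymbol{v}_k^{0}$, and this is the step that fails. As you yourself compute, the ansatz forces $\alpha\boldsymbol{v}_k^0 = \alpha\|\boldsymbol{v}_k^0\|^2 B\boldsymbol{v}_k^0$, i.e.\ $\boldsymbol{v}_k^0$ must be an eigenvector of $B$. That extra structure is not implied by the hypotheses and cannot be ``extracted from the simultaneous-diagonalization setup'': congruence diagonalization $(\boldsymbol{v}_k^0)^{\top}B\boldsymbol{v}_l^0=\delta_{kl}$ says nothing about $B\boldsymbol{v}_k^0$ being parallel to $\boldsymbol{v}_k^0$, and generically the perturbed eigenvector rotates, acquiring $O(\alpha)$ components along the other $\boldsymbol{v}_i^0$ and along $\mathrm{nullspace}(B)$. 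The paper's proof handles exactly this: it expands $\boldsymbol{v}_k^{\alpha}=\sum_{i\le\mathrm{rank}(B)}a_i^{\alpha}\boldsymbol{v}_i^{0}+\sum_{i>\mathrm{rank}(B)}a_i^{\alpha}\boldsymbol{u}_i$ in the complete basis formed by the unperturbed generalized eigenvectors together with an orthonormal basis of $\mathrm{nullspace}(B)$ (the hypothesis $\mathrm{nullspace}(A)\supset\mathrm{nullspace}(B)$ is used precisely to annihilate the nullspace components under $A$), left-multiplies by $(\boldsymbol{v}_k^0)^{\top}$ to obtain $\lambda_k^{\alpha}=\lambda_k^{0}/(1+\alpha\|\boldsymbol{v}_k^0\|^2)$, and then controls the norm change through the weighted identity $\sum_i\bigl((a_i^{\alpha})^2-\delta_{ik}\bigr)\lambda_i^{0}=-\alpha\lambda_k^{0}\|\boldsymbol{v}_k^0\|^2+O(\alpha^2)$, dividing by $\lambda_{\mathrm{min}}^0$ or $\lambda_{\mathrm{max}}^0$ to produce the two-sided bound. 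Those ratios $\lambda_k^0/\lambda_{\mathrm{min}}^0$ and $\lambda_k^0/\lambda_{\mathrm{max}}^0$ are generated by the cross-terms your ansatz discards; your route gives a difference $\|W\boldsymbol{v}_k^0\|\bigl(1-(1+\alpha\|\boldsymbol{v}_k^0\|^2)^{-1/2}\bigr)$ that contains no such ratios and even has the opposite sign to the claimed bounds, which is a symptom of the missing rotation of the eigenvector.

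Two smaller points. First, $W$ should satisfy $W^{\top}W=B$, not $A$: comparison with Theorem~\ref{thm:stable-gep} shows the quantity being perturbed is $\sqrt{\boldsymbol{v}^{\top}\mathfrak{N}(C_t)\boldsymbol{v}}$, and under the $B$-orthonormalization $\|W\boldsymbol{v}_k^0\|=1$, not $\sqrt{\lambda_k^0}$; with your identification the bounds cannot come out in the stated form. Second, you are right that the first displayed identity is inconsistent with $\lambda_k^{\alpha}=\lambda_k^{0}/(1+\alpha\|\boldsymbol{v}_k^0\|^2)$ as written; the paper's own derivation yields the latter, so the left-hand side of the identity should be read as $\lambda_k^{\alpha}$ rather than $\lambda_k^{0}-\lambda_k^{\alpha}$. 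Spotting that is useful, but it does not repair the central gap above.
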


\begin{proof}
A symmetric and positive-semidefinite matrix $B$ has orthonormal eigenvectors $\boldsymbol{u}_1, ..., \boldsymbol{u}_n$,  where the first $\mathrm{rank}(B)$ eigenvectors span the column space of $B$ and the rest span the nullspace. Note that the generalized eigenvectors $\boldsymbol{v}_k^{0},(k=1,...,\mathrm{rank}(B))$ are mutually linearly independent due to $(\boldsymbol{v}_k^{0})^{\top}B\boldsymbol{v}_l^{0} = \delta_{kl}$. Hence, $\{\boldsymbol{v}_1^{0}, ..., \boldsymbol{v}^{0}_{\mathrm{rank}(B)}, \boldsymbol{u}_{\mathrm{rank}(B)+1}, ..., \boldsymbol{u}_n\}$ becomes a complete basis of $\mathbb{R}^n$. Therefore, for any $k$,
\begin{align*}
    \boldsymbol{v}_k^{\alpha} &= \sum_{i=1}^{\mathrm{rank}(B)} a_i^{\alpha}\boldsymbol{v}_i^{0} + \sum_{i=\mathrm{rank}(B)+1}^n a_i^{\alpha}\boldsymbol{u}_i,
\end{align*}
for some $a_1^{\alpha}, ..., a_n^{\alpha}$.
Substituting the above expression into  $A\boldsymbol{v}_k^{\alpha}=\lambda_k^{\alpha}(B+\alpha I)\boldsymbol{v}_k^{\alpha}$, we have
\begin{align}\label{eq:gep-expansion}
    A\sum_{i=1}^{\mathrm{rank}(B)} a_i^{\alpha}\boldsymbol{v}_i^{0} &= \lambda_k^{\alpha}B\sum_{i=1}^{\mathrm{rank}(B)} a_i^{\alpha}\boldsymbol{v}_i^{0} \nonumber \\
    &+\alpha \lambda_k^{\alpha}\left(\sum_{i=1}^{\mathrm{rank}(B)} a_i^{\alpha}\boldsymbol{v}_i^{0} + \sum_{i=\mathrm{rank}(B)+1}^n a_i^{\alpha}\boldsymbol{u}_i\right), 
\end{align}
where we used $A\boldsymbol{u}_{i}=0$ for $i > \mathrm{rank}(B)$ because $\mathrm{nullspace}(A)\supset \mathrm{nullspace}(B)$. 
Multiplying both sides by $(\boldsymbol{v}_k^0)^{\top}$ from the left, 
\begin{align*}
    a_k^{\alpha}\lambda_k^{0} &= \lambda_k^{\alpha}a_k^{\alpha} + \alpha \lambda_k^{\alpha}a_k^{\alpha}\|\boldsymbol{v}_k^0\|^2,
\end{align*}
where $(\boldsymbol{v}_k^0)^{\top}A\boldsymbol{v}_i^{0} = \lambda_k^0(\boldsymbol{v}_k^0)^{\top}B\boldsymbol{v}_i^0 = \lambda_k\delta_{ik}$ is used.
When $\alpha$ is sufficiently small, $a_k^{\alpha}$ is nonzero (almost 1). Thus, dividing by $a_k^{\alpha}$, we obtain
\begin{align}\label{eq:stability-of-lambda}
    \lambda_k^{\alpha} &= \frac{\lambda_k^{0}}{1+\alpha\|\boldsymbol{v}_k^0\|^2} = \lambda_k^{0} + O(\alpha) > 0.
\end{align}
Next, by simple calculations, 
\begin{align*}
    \lambda_k^{\alpha} &= (\boldsymbol{v}_k^{\alpha})^{\top}A\boldsymbol{v}_k^{\alpha},\\
    & = \left(\sum_{i=1}^{\mathrm{rank}(B)} a_i^{\alpha} \boldsymbol{v}_i^{0}\right)^{\top}A\sum_{i=1}^{\mathrm{rank}(B)} a_i^{\alpha}\boldsymbol{v}_i^{0}, \\
    &= \sum_{i=1}^{\mathrm{rank}(B)} (a_i^{\alpha})^2\lambda_i^{0}, \\
    &= \lambda_k^{0} + \sum_{i=1}^{\mathrm{rank}(B)} \left((a_i^{\alpha})^2 - \delta_{ik}\right)\lambda_i^{0}.
\end{align*}
Note that $\lambda_k^{\alpha} = \lambda_k^{0} -\alpha\lambda_k^{0}\|\boldsymbol{v}_k^0\|^2 + O(\alpha^2)$ from~(\ref{eq:stability-of-lambda}).
Therefore, 
\begin{align*}
    \sum_{i=1}^{\mathrm{rank}(B)}\left((a_i^{\alpha})^2 - \delta_{ik}\right)\lambda_i^{0} &= -\alpha\lambda_k^{0}\|\boldsymbol{v}_k^0\|^2 + O(\alpha^2).
\end{align*}
For sufficiently small $\alpha$, the right-hand side is negative because $\alpha\lambda_k^0\|\boldsymbol{v}_k^0\|^2 > 0$ and $O(\alpha^2)\approx 0$, and thus, the left-hand side is also negative. Hence, 
\begin{align*}
    -\alpha\|\boldsymbol{v}_k^0\|^2\frac{\lambda_k^{0}}{\lambda_{\mathrm{min}}^{0}}  + O(\alpha^2) &\le \sum_{i=1}^{\mathrm{rank}(B)} ((a_i^{\alpha})^2 - \delta_{ik}) \\
    &\le -\alpha\lambda_k^{0}\|\boldsymbol{v}_k^0\|^2\frac{\lambda_k^{0}}{\lambda_{\mathrm{max}}^{0}}  + O(\alpha^2).
\end{align*}
\end{proof}

\begin{proof}[Proof of Theorem~\ref{thm:stable-gep}]
To simplify the notations, let $A=C_{t}(X)^{\top}C_{t}(X)$ and $B=\mathfrak{N}(C_t)$. Let us consider 
\begin{align}
A\boldsymbol{v}_{k}^{\alpha} & =\lambda_{k}^{\alpha}(B+\alpha I)\boldsymbol{v}_{k}^{\alpha},\label{eq:perturbed-gep}
\end{align}
where $\alpha I$ is a small perturbation on $B$ and $\lambda_{k}^{\alpha}$
is the perturbed $k$-th generalized eigenvalue. We cannot directly apply the standard matrix perturbation theory, which assumes positive-definite $B$ and describes $\boldsymbol{v}_{k}^{\alpha}$ by a linear combination of unperturbed generalized eigenvectors. 
In our case, $B$ is positive-semidefinite, and thus there are only $\mathrm{rank}(B)$ generalized eigenvectors. Hence, the generalized eigenvectors do not form a complete basis of $\mathbb{R}^{|C_{t}|}$, and $\boldsymbol{v}_{k}^{\alpha}$ cannot always be described by these generalized eigenvectors. 
Fortunately, the theorem above holds using the fact $\mathrm{nullspace}(A)\supset\mathrm{nullspace}(B)$, where $\mathrm{nullspace(\cdot)}$ denotes the nullspace of a given matrix. This relation holds because any vector $\boldsymbol{v}\in\mathrm{nullspace}(B)$ implies the zero polynomial according to the third requirement for $\mathfrak{n}$. 
From Lemma~\ref{lem:perturbation}, we conclude that the claim holds.
\end{proof}

\subsection{Coefficient normalization}\label{sec:coefficient-normalization}
Introducing the coefficient normalization into the basis construction needs large computational cost. There are two sources that cause the cost:
(i) we need to expand polynomials to obtain their coefficient vectors because in our case, polynomials are in the nested sum-product form of polynomials due to the repetition of \texttt{Step1} and \texttt{Step3} along the degree. In general, such an expansion is computationally expensive because one has to manipulate exponentially many monomials from the expansion in the worst case. (ii) Even after the polynomial expansion, the obtained coefficient vectors of polynomials are exponentially long in general. Specifically, a degree-$t$ $n$-variate polynomial has a coefficient vector of length $\binom{n+t}{n}$. 
Here, we propose two methods for each of the two challenges above.

\subsubsection{Circumventing polynomial expansion}
The main idea for circumventing polynomial expansion is to hold coefficient vectors of polynomials separately and update these vectors by applying to them the equivalent transformations that are applied to the corresponding polynomials. For example, let us consider a weighted sum $af+bg$ of two polynomials $f$ and $g$ by weights $a,b\in\mathbb{R}$. Then, the coefficient vector of $af+bg$ is also a weighted sum $\mathfrak{n}_{\mathrm{c}}(af+bg) =  a\mathfrak{n}_{\mathrm{c}}(f)+b\mathfrak{n}_{\mathrm{c}}(g)$. In contrast to the weighted sum case, it is not easy to calculate the coefficient vector of the product of polynomials, e.g., $\mathfrak{n}_{\mathrm{c}}(fg)$. We encounter such a case at \texttt{Step1}, where the precandidate polynomials are generated from the multiplication across linear polynomials and nonlinear polynomials. We will now deal with this problem.

Let us consider $n$-variate polynomials. Let $M_{n}^{t}=\binom{n+t-1}{t}$ and $M_{n}^{\le t}=\binom{n+t}{t}$ be the number of $n$-variate monomials of degree $t$ and of degree up to $t$,
respectively. For simple description, we assume that monomials and coefficients are indexed in the degree-lexicographic order. For instance, in the two-variate case, the degree-lexicographic order is $1,x,y,x^{2},xy,y^{2},x^{3},...,$ and so forth ($x,y$ are variables). We will refer to ``the $i$-th monomial'' according to this ordering. Now, we consider a matrix that extends a coefficient vector of a degree-$t$ polynomial to that of a degree-$(t+1)$ polynomial after multiplication by a linear polynomial.

\begin{rem}
Given a linear polynomial $p$, there is a matrix $R_{p}^{\le t}\in\mathbb{R}^{M_{n}^{\le t+1}\times M_{n}^{\le t}}$ such that
\begin{align*}
    \mathfrak{n}_{\mathrm{c}}(pq) = R_{p}^{\le t}\mathfrak{n}_{\mathrm{c}}(q),
\end{align*} 
for any polynomial $q$ of degree $t$.
\end{rem}
The existence of such matrix $R_{p}^{\le t}$ will soon become evident (see~\cite{vidal2005generalized} for the case of homogeneous polynomials). Suppose a linear polynomial $p$ is described by $p=\sum_{i=0}^n b_kx_k$, where $b_k\in\mathbb{R}$ are coefficients and $x_{1},...,x_{n}$ are variables. For convenience, we use a notation $x_{0}=1$. Then, $R_{p}^{\le t}$ can be described as 
\begin{align*}
R_{p}^{\le t} & =\sum_{k=0}^{n}b_{k}R_{x_{k}}^{\le t},
\end{align*}
because as observed above, the coefficient vector of the weighted sum of polynomials is the weighted sum of their coefficient vectors.
Now, the existence of $R_{x_{k}}^{\le t}$ is evident. In fact, the $(i,j)$-th entry of $R_{x_{k}}^{\le t}$ takes value one if the $i$-th monomial becomes the $j$-th monomial by the multiplication with $x_k$, and otherwise the $(i,j)$-th entry of $R_{x_{k}}^{\le t}$ is zero value. Note that $R_{x_{k}}^{\le t}$ is not dependent on input data (except the number of variables), and thus, we can compute these matrices in advance. 
Different monomials are mapped to different monomials after multiplied by $x_k$. Thus, each column of $R_{x_{k}}^{\le t}$ has exactly one nonzero entry (and it is 1), implying $R_{x_{k}}^{\le t}$ is a sparse matrix with only $M_{n}^{\le t}$ entries, which can be efficiently handled. Moreover, we can represent $R_{x_k}^{\le t}$ in a block diagonal matrix for $1\le k \le n$,  
\begin{align*}
R_{x_{k}}^{\le t} & =\left(\begin{array}{cc}
R_{x_{k}}^{\le t-1} & O\\
O & R_{x_{i}}^{t}
\end{array}\right),
\end{align*}
where $O$ is the zero matrix, and $R_{x_{k}}^{t}\in\mathbb{R}^{M_{n}^{t+1}\times M_{n}^{t}}$
is a submatrix of $R_{x_{k}}^{\le t}$ that corresponds to the mapping from degree-$t$ monomials to degree-$(t+1)$ monomials. For $k=0$, $R_{x_0}^{\le t}\in\mathbb{R}^{M_{n}^{\le t+1}\times M_{n}^{\le t}}$ is a rectangular diagonal matrix with value one along its diagonal.

In summary, in the basis construction, we first hold the coefficient vectors of $F_1$ besides polynomials (or their evaluation vectors). Then, for each $p\in F_1$, we linearly combine precomputed $R_{x_k}^{\le 1}$ to obtain $R_{p}^{\le 1}$. Using these matrices, we can obtain the coefficient vectors of $C_2^{\mathrm{pre}}$. We then extend $R_{p}^{\le 1}$ to $R_{p}^{\le 2}$ by appending $R_{p}^2$, which is a linear combination of the precomputed $R_{x_k}^{2}$ to obtain $C_3^{\mathrm{pre}}$. In this way, we can directly manipulate coefficient vectors without performing costly polynomial expansions.

In addition to its less computational cost, we have another practical advantage in our approach that skips the polynomial expansion: it can work with the fast numerical implementation of basis construction. In the numerical implementation, a polynomial is expressed by its evaluation vector instead of a symbolic entity~(for example, see the code of~\cite{livni2013vanishing} provided in the first author's web page). Because we only know an evaluation vector of a polynomial in the numerical implementation, the ``polynomial'' cannot be expanded because its symbolic form is unknown. 
Numerical implementations work much faster because in practice, symbolic operations are much slower than the same number of numerical operations (matrix--vector operations). Also, it is slow to evaluate symbolic entities, although many evaluations are necessary to obtain evaluation vectors of polynomials.

\subsubsection{Coefficient truncation for acceleration}\label{sec:coefficient-truncation}
We here describe the coefficient truncation method to deal with significantly long coefficient vectors. We propose to truncate coefficient vectors based on the importance of the corresponding monomials. In particular, at each degree $t$, we only keep degree-$t$ monomials that have large coefficients in the degree-$t$ nonvanishing polynomials $F_{t}=\left\{ f_{1},f_{2},...,f_{s}\right\} $. Although this strategy is simple, our coefficient truncation method has an interesting contrast to a monomial-based algorithm as will be further discussed.

The specific procedures of the proposed coefficient truncation are as follows. Let $\mathfrak{n}_{\mathrm{c}}^t: \mathcal{P}_n\to M_{n}^t$ be a mapping that gives the coefficient vector corresponding to degree-$t$ monomials of the given polynomial; thus, $\mathfrak{n}_{\mathrm{c}}^t(f_i)$ is a subvector of $\mathfrak{n}_{\mathrm{c}}(f_i)$. With the same abuse of notation of $\mathfrak{n}_{\mathrm{c}}(F_t)$, we define $\mathfrak{n}_{\mathrm{c}}^t(F_t)$ as a matrix whose $i$-th column is $\mathfrak{n}_{\mathrm{c}}^t(f_i)$.
Note that the $j$-th row of $\mathfrak{n}_{\mathrm{c}}^t(F_t)$ corresponds to the coefficients of the $j$-th degree-$t$ monomial across polynomials of $F_t$. Let $\Delta_j$ be the norm of the $j$-th row of $\mathfrak{n}_{\mathrm{c}}^t(F)$. Then, setting a threshold parameter $\theta$, we select monomials individually from larger $\Delta_j$ as long as the following holds, 
\begin{align}
    \sum_{j\in\mathcal{B}_{t}}\Delta_{j}^2 \le\theta^{2}\label{eq:truncation-threshold},
\end{align}
where $\mathcal{B}_{t}$ is the index set of selected degree-$t$ monomials. We also truncate $R_{x_k}^{t}$ of the previous section to size $M_{n}^{t+1}\times |\mathcal{B}_t|$, which becomes a sparse matrix with $|\mathcal{B}_t|$ nonzero entries. Because the coefficient norm is always underestimated from the truncated coefficient vectors, we need to scale the norm of the truncated coefficient vectors according to the truncation rate. To this end, we calculate a rate $\gamma_t$, which is a ratio of the root square sum of the preserved coefficients to the root square sum of the full coefficients; that is, 
\begin{align*}
 \gamma_t &= \sqrt{\sum_{j\in\mathcal{B}_t}\Delta_j^2\left/\|\mathfrak{n}_{\mathrm{c}}^t(F)\|^2\right.}.   
\end{align*}
The product $\prod_{\tau=1}^t \gamma_{\tau}$ approximates the truncation rate up to degree $t$. The normalization matrix for \texttt{Step2}$^{\prime}$ is set to
\begin{align}\label{eq:truncated-normalization-matrix}
    \widetilde{\mathfrak{n}}_{\mathrm{c}}(C_t)^{\top}\widetilde{\mathfrak{n}}_{\mathrm{c}}(C_t)\left(\prod_{\tau=1}^t \gamma_{\tau}\right)^{-1},
\end{align}
where $\widetilde{\mathfrak{n}}_\mathrm{c}(C_t)$ is a matrix whose column vectors are consist of the truncated coefficient vectors of $C_t$.

The proposed coefficient truncation is similar to a monomial-based algorithm,  approximate Buchberger--M\"oller algorithm (ABM algorithm;~\cite{limbeck2014computation}). This algorithm proceeds from lower to higher degree monomials, while updating a set of ``important'' monomials $\mathcal{O}$ (called an order ideal), which corresponds to the basis set $F$ of nonvanishing polynomials of the SBC algorithm. Given a new monomial $b$, if the evaluation vector of $b$ cannot be well approximated by a linear combination of monomials in $\mathcal{O}$, the ABM algorithm assorts $b$ into $\mathcal{O}$. More specifically, if $b(X)\approx \sum_{m\in\mathcal{O}}c_{m}m(X)$ for some coefficients $\{c_{m}\}_{m\in\mathcal{O}}$, 
then $b-\sum_{m\in\mathcal{O}}c_{m}m$
is an approximate vanishing polynomial and $b$ is discarded; otherwise
$b$ is appended to $\mathcal{O}$. Importantly, monomials divisible by $b$ (i.e., multiples of $b$) need not be
considered at a higher degree, which reduces the number of monomials
to handle. It is shown that $|\mathcal{O}|\le|X|$; thus, the number
of monomials to handle does not explode. 

\begin{table}
\caption{Summary of Example~\ref{exa:two-monoms}}\label{table:example1}
\centering
\begin{tabular}{|c|c|c|}
\hline 
Setting & \multicolumn{2}{c|}{\makecell{$m_{1}\prec m_{2}$,\\ $m_{1}(X)=km_{2}(X)$}}\tabularnewline
\hline 
\hline 
 & \multirow{1}{*}{Nonvanishing basis} & Most important monomial\tabularnewline
\hline 
ABM & $m_{1}$ & $m_{1}$\tabularnewline
\hline 
SBC-$\mathfrak{n}_{\mathrm{c}}$ & $\frac{k}{\sqrt{1+k^{2}}}m_{1}+\frac{1}{\sqrt{1+k^{2}}}m_{2}$ & $\begin{cases}
m_{1} & (k<1)\\
m_{2} & (k>1)
\end{cases}$\tabularnewline
\hline 
\end{tabular}
\end{table}

\begin{table}
\caption{Summary of Example~\ref{exa:three-monoms}}\label{table:example2}\centering
\begin{tabular}{|c|c|c|}
\hline 
Setting & \multicolumn{2}{c|}{\makecell{$m_{1}\prec m_{2}\prec m_{3}$,\\ $m_{1}(X)=km_{2}(X)$,\\ $m_{i}(X)^{\top}m_{3}(X)=0,(i=1,2)$}}\tabularnewline
\hline 
\hline 
 & \multirow{1}{*}{Nonvanishing basis} & Top-2 important monomials\tabularnewline
\hline 
ABM & $m_{1},m_{3}$ & $m_{1},m_{3}$\tabularnewline
\hline 
SBC-$\mathfrak{n}_{\mathrm{c}}$ & $\frac{k}{\sqrt{1+k^{2}}}m_{1}+\frac{1}{\sqrt{1+k^{2}}}m_{2},m_{3}$ & $m_{3},\begin{cases}
m_{1} & (k<1)\\
m_{2} & (k>1)
\end{cases}$\tabularnewline
\hline 
\end{tabular}
\end{table}

The proposed coefficient truncation is distinct from the strategy of ABM algorithm in that it is fully data driven, whereas ABM algorithm relies on a specific monomial ordering. Now, we provide two examples to highlight the difference in their strategies (also see Tables~\ref{table:example1} and~\ref{table:example2}).
\begin{example}
Let us consider to decide the more ``important'' monomial from $m_1$ and $m_2$ of the same degree, where $m_{1}(X)=km_{2}(X)\ne\boldsymbol{0}$ for a constant $k$ and $m_{1}\prec m_{2}$ for some monomial order.
The ABM strategy selects $m_{1}$ as the more important monomial because of $m_{1}\prec m_{2}$, whereas the proposed strategy selects $m_1$ when $k> 1$ and $m_2$ when $k< 1$.
\label{exa:two-monoms}
\end{example}
The process of our strategy to select more important monomial from $m_1$ and $m_2$ is as follows. By solving a generalized eigenvalue problem, the following nonvanishing polynomial is obtained. 
\begin{equation}
\frac{k}{\sqrt{1+k^{2}}}m_{1}+\frac{1}{\sqrt{1+k^{2}}}m_{2}.\label{eq:shared-dir-example}
\end{equation}
According to the coefficients of each monomial, $m_{1}$ is considered more important when $k > 1$, and $m_2$ is considered more important when $k < 1$.
This result is quite natural because, for example, $k> 1$ implies that $m_1$ is more nonvanishing than $m_2$ because $\|m_1(X)\| = k\|m_2(X)\|$.
In this way, our strategy of keeping monomials with larger coefficients is fully data-driven.
In contrast, due to the predefined monomial order $m_{1}\prec m_{2}$, ABM strategy consistently selects
$m_{1}$, regardless of $k$. 

Next, we introduce an additional monomial $m_{3}$.
\begin{example}
In addition to $m_1$ and $m_2$ in Example~\ref{exa:two-monoms}, let us consider a monomial $m_{3}$, where the degree of $m_3$ is the same as that of $m_1$ and $m_2$, the  evaluation
vector $m_{3}(X)$ that is orthogonal to $m_{1}(X)$ and $m_{2}(X)$, and $m_1\prec m_2 \prec m_3$. The ABM strategy selects $m_1$ and $m_3$ as the most important and the next most important monomial, respectively, due to the monomial order and the relation of evaluation vectors. On the other hand, the proposed strategy considers $m_3$ as the most important and $m_1$ or $m_2$ as the next most important based on $k$.
 \label{exa:three-monoms}
\end{example}
The process of our strategy to first select $m_3$ and then $m_1$ is as follows. 
By solving a generalized eigenvalue problem, we obtain $m_3$ and~(\ref{eq:shared-dir-example}) as nonvanishing polynomials. Based on the coefficients of the monomials, $m_3$ is considered the most important by our strategy. From the previous discussion, the second most important monomial is $m_1$ if $k>1$, otherwise $m_2$. We emphasize that the magnitude of the coefficient of $m_{3}$ is larger than that of $m_{1}$
and $m_{2}$ because the coefficient norm of nonvanishing polynomials are normalized and
$m_{3}$ completely takes the coefficient norm
1, whereas $m_{1}$ and $m_{2}$ share the coefficient norm
1 by $k/\sqrt{1+k^{2}}$ and $1/\sqrt{1+k^{2}}$ due to their mutually
linearly dependent evaluation vectors.
This result implies that our strategy gives a priority to monomials that
have unique evaluation vectors, such as $m_{3}$.  The monomials that have
similar evaluation vectors to others, such as $m_{1}$ and $m_{2}$, tend to have moderate coefficients. Hence, our coefficient truncation method takes monomials from those with unique evaluation vectors to those with less unique evaluation vectors. 

As a consequence of truncating coefficient vectors, coefficient matrix $\mathfrak{n}_{\mathrm{c}}(C_t)^{\top}\mathfrak{n}_{\mathrm{c}}(C_t)$ in~(\ref{eq:gep-coefcase}) is replaced with the one computed from the truncated coefficient vectors~(\ref{eq:truncated-normalization-matrix}). The coefficient norm of the obtained polynomials is no longer equal but only close to unity. However, we can still calculate the exact evaluation of polynomials by keeping their evaluation vectors and coefficient vectors separate. Thus, the generalized eigenvalues $\{\lambda_{i}\}_{i=1,2,...,|C_t|}$ at \texttt{Step2}$^{\prime}$ maintain the exact value of the square extent of vanishing. 

It is difficult to estimate the error caused by the coefficient truncation because basis construction proceeds iteratively, and error gets accumulated.
The following theorem gives a theoretical lower bound of the
coefficient truncation without any loss. 
\begin{thm}\label{thm:truncation-limit}\label{THM:TRUNCATION-LIMIT} For an exact calculation of coefficients, we need at least $|F_t|$
monomials for each degree $t$. 
\end{thm}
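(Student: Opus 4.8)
The plan is to reduce the statement to a rank claim about the degree-$t$ coefficients of $F_t$. The coefficient truncation keeps only the degree-$t$ monomials indexed by $\mathcal{B}_t$, i.e.\ it replaces the matrix $\mathfrak{n}_{\mathrm{c}}^t(F_t)\in\mathbb{R}^{M_n^t\times|F_t|}$ by the submatrix formed by its rows in $\mathcal{B}_t$ (the scalar rescaling by $\gamma_t$ does not affect rank). Since every coefficient produced at degree $t+1$ is obtained from $\mathfrak{n}_{\mathrm{c}}^t(F_t)$ through the sparse lifting matrices $R_p^{\le t}$ (the degree-$(t+1)$ part of $\mathfrak{n}_{\mathrm{c}}(pq)$ depends only on the degree-$t$ part of $q$), and similarly at every higher degree, an exact calculation of coefficients is possible only if this row selection preserves the column space of $\mathfrak{n}_{\mathrm{c}}^t(F_t)$, hence only if $|\mathcal{B}_t|\ge\mathrm{rank}(\mathfrak{n}_{\mathrm{c}}^t(F_t))$. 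So it suffices to prove that $\mathrm{rank}(\mathfrak{n}_{\mathrm{c}}^t(F_t))=|F_t|$, equivalently that the degree-$t$ homogeneous parts of the polynomials in $F_t$ are linearly independent.

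To establish this, I would show that $\mathrm{span}(F_t)$ contains no nonzero polynomial of degree strictly less than $t$; the claim then follows because every element of $\mathrm{span}(F_t)$ has degree at most $t$, so a linear dependence among the degree-$t$ parts of $F_t$ would exhibit precisely such a lower-degree polynomial. Write $F_t=\{C_t\boldsymbol{v}_i : \sqrt{\lambda_i}>\epsilon\}$ with the generalized eigenpairs of~(\ref{eq:gep}), and suppose $h=\sum_i a_i C_t\boldsymbol{v}_i$ has $\deg h<t$. On one hand $h\in\mathrm{span}(C_t)$, so by the orthogonalization~(\ref{eq:orthogonalization}) in \texttt{Step1} we have $h(X)\in\mathrm{span}(C_t(X))$, which is orthogonal to $\mathrm{span}(F^{t-1}(X))$. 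On the other hand, since $h$ has degree less than $t$, Theorem~\ref{thm:basis} lets us write $h=h'+g'$ with $h'\in\mathrm{span}(F^{t-1})$ and $g'$ vanishing on $X$, so $h(X)=h'(X)\in\mathrm{span}(F^{t-1}(X))$. The two statements force $h(X)=\boldsymbol{0}$.

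It then remains to convert $h(X)=\boldsymbol{0}$ into $\boldsymbol{a}=\boldsymbol{0}$. Using $C_t(X)^{\top}C_t(X)\boldsymbol{v}_i=\lambda_i\mathfrak{N}(C_t)\boldsymbol{v}_i$ together with the orthonormality $\boldsymbol{v}_i^{\top}\mathfrak{N}(C_t)\boldsymbol{v}_j=\delta_{ij}$ built into~(\ref{eq:gep}), one computes $\|h(X)\|^2=\sum_i a_i^2\lambda_i$, analogously to Remark~\ref{rem:extent-of-vanishing}. Since each generator $C_t\boldsymbol{v}_i$ belongs to $F_t$, we have $\lambda_i>\epsilon^2\ge 0$, so vanishing of the sum forces every $a_i=0$. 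Hence $\mathrm{span}(F_t)$ has no nonzero polynomial of degree below $t$, whence $\mathrm{rank}(\mathfrak{n}_{\mathrm{c}}^t(F_t))=|F_t|$; consequently, if $|\mathcal{B}_t|<|F_t|$ the retained coefficient matrix has rank $<|F_t|$, so some combination of $F_t$ with a nonzero degree-$t$ part is recorded with a zero degree-$t$ part and the coefficients are no longer exact. This gives the asserted lower bound of $|F_t|$ monomials at degree $t$.

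I expect the middle step to be the main obstacle. Its two ingredients --- that \texttt{Step1} removes from $C_t(X)$ exactly the component lying in $\mathrm{span}(F^{t-1}(X))$, and that the evaluation vector of every polynomial of degree less than $t$ already lies in $\mathrm{span}(F^{t-1}(X))$ --- are individually available, the latter from Theorem~\ref{thm:basis}; the care needed is to invoke them for the same $F^{t-1}$ and to use the evaluation-exact decomposition $h=h'+g'$, which is immediate for $\epsilon=0$. For $\epsilon>0$ one cannot quote Theorem~\ref{thm:basis} verbatim and instead needs the finer observation that, at each degree, the directions classified as $\epsilon$-vanishing are precisely those discarded by the orthogonalization at the next degree, so that a degree-$(<t)$ member of $\mathrm{span}(F_t)$ still has evaluation orthogonal to the subspace it is contained in; the remaining steps --- the rank/truncation bookkeeping and the eigenvalue identity --- are routine.
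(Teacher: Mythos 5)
Your proposal is correct and follows essentially the same route as the paper's own proof: the orthogonality of $\mathrm{span}(F_t(X))$ to $\mathrm{span}(F^{t-1}(X))$ combined with the decomposition of a degree-$t$ polynomial into its degree-$t$ monomial part plus a lower-degree remainder, recast as the rank statement $\mathrm{rank}(\mathfrak{n}_{\mathrm{c}}^t(F_t))=|F_t|$. You in fact make explicit the step the paper leaves implicit (that $\mathrm{span}(F_t)$ contains no nonzero polynomial of degree below $t$, via $\|h(X)\|^2=\sum_i a_i^2\lambda_i$ with $\lambda_i>0$), and your closing caveat about $\epsilon>0$ applies equally to the paper's argument.
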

\begin{proof}
Evaluation vectors of nonvanishing polynomials are mutually orthogonal. They form a basis that spans the subspace of $\mathbb{R}^{|X|}$
for a set of data points $X$, and appending new nonvanishing polynomials gradually
completes the basis. 
By construction, $\mathrm{span}(F_t(X))$ is a subspace of $\mathrm{rank}(|F_t|)$ that is orthogonal to $\mathrm{span}(F^{t-1}(X))$. A degree-$t$ polynomial is the sum of a linear combination of degree-$t$ monomials and a polynomial of degree less than $t$. Thus, we need $|F_t|$ or more degree-$t$ monomials to obtain $F_t$ whose evaluation vectors are mutually orthogonal and orthogonal to $\mathrm{span}(F^{t-1}(X))$, which concludes that the claim holds true. 
\end{proof}
Theorem~\ref{thm:truncation-limit} states the minimal number of monomials required to perform an exact calculation of coefficient vectors. The equality holds when the evaluation vectors of monomials are always orthogonal until the termination. Since this is too optimistic in practice, we propose to keep $O(|F_t|)$ coefficients at each degree $t$. Suppose the basis construction terminates at $t=T$. Then, the length of coefficient vectors at $T-1$ is $O(|F^{T-1}|)$. The matrix used to calculate the coefficient vectors of $C_T$ is $O(|F^{T-1}|)$-sparse. The number of new monomials in this step is $O(|F^{T-1}|n)$. It is known $|F^T|\le |X|$ because the evaluation vectors of $F^T$ (approximately) spans the $\mathbb{R}^{|X|}$. Therefore, the coefficient truncation yields coefficient vectors in polynomial-order length $O(n|X|)$. As a consequence, computing $\mathfrak{n}_{\mathrm{c}}(C_t)^{\top}\mathfrak{n}_{\mathrm{c}}(C_t)$ in~(\ref{eq:gep}) costs $O(|C_t|^2\cdot n|X|)=O(n^3|X|^3)$. This is acceptable when one considers the cost of solving~(\ref{eq:gep}) is also $O(|C_t|^2)=O(n^3|X|^3)$.

Lastly, we consider another idea for approximating coefficient norm of polynomials: how about calculating the norm of the evaluation vector of a polynomial at randomly sampled points to infer the coefficient norm? Unfortunately, this strategy does not work as further discussed. 
Let $\mathcal{V}_n^t(\cdot)$ be the Veronese map, which gives the evaluations of $n$-variate monomials of degree up to $t$. For instance, $\mathcal{V}_2^2(\boldsymbol{x})=\left(1, x_1, x_2, x_1^2, x_1x_2, x_2^2\right)\in\mathbb{R}^{1\times 6}$. For a set of points $X$, we define $\mathcal{V}_n^t(X)\in\mathbb{R}^{|X|\times  M_{n}^{\le t}}$ as a matrix whose $i$-th row is the Veronese map of the $i$-th point. Now, let us consider a polynomial $g=C_t\boldsymbol{v}$ and its evaluation for randomly sampled points $Y$.
\begin{align*}
    \|g(Y)\|^2 &=\|C_t(Y)\boldsymbol{v}\|^2, \\
    &= \|\mathcal{V}_n^t(Y)\mathfrak{n}_{\mathrm{c}}(C_t)\boldsymbol{v}\|^2, \\ 
    &= \boldsymbol{v}^{\top}\mathfrak{n}_{\mathrm{c}}(C_t)^{\top}\mathcal{V}_n^t(Y)^{\top}\mathcal{V}_n^t(Y)\mathfrak{n}_{\mathrm{c}}(C_t)\boldsymbol{v}.
\end{align*}
Note that $\mathfrak{n}_{\mathrm{c}}(C_t)\boldsymbol{v}$ is the coefficient vector of $g$. Thus, if $\mathcal{V}_n^t(Y)^{\top}\mathcal{V}_n^t(Y)=I$, then we can estimate the coefficient norm of $g$ from the random evaluation vector $g(Y)$. However, this cannot be achieved. For instance, when $Y=\{(y_1, y_2)^{\top}\} \subset \mathbb{R}^2$,
\begin{align*}
    \mathcal{V}_2^2(Y)^{\top}\mathcal{V}_2^2(Y) &= \left( \begin{array}{cccc}{1} & {y_1}  \cdots & y_2^2 \end{array}\right)^{\top}\left( \begin{array}{cccc}{1} & {y_1}  \cdots & y_2^2 \end{array}\right),\\
    &= \left( \begin{array}{ccccc}
1 &  &  &  & \\  
  & y_1^2 &  &  & \\
  &  & \ddots &  & y_1^2y_2^2\\
  &  &    & y_1^2y_2^2  & \\
  &  &    &   & \ddots \\
\end{array}\right),
\end{align*}
which has $y_1^2y_2^2$ both in diagonal and off-diagonal entries. Therefore, $\mathcal{V}_2^2(\boldsymbol{x})^{\top}\mathcal{V}_2^2(\boldsymbol{x})$ will not be the identity matrix regardless of the sampled points. 

\section{Experiments}
\begin{figure*}[t]
\includegraphics[width=\linewidth]{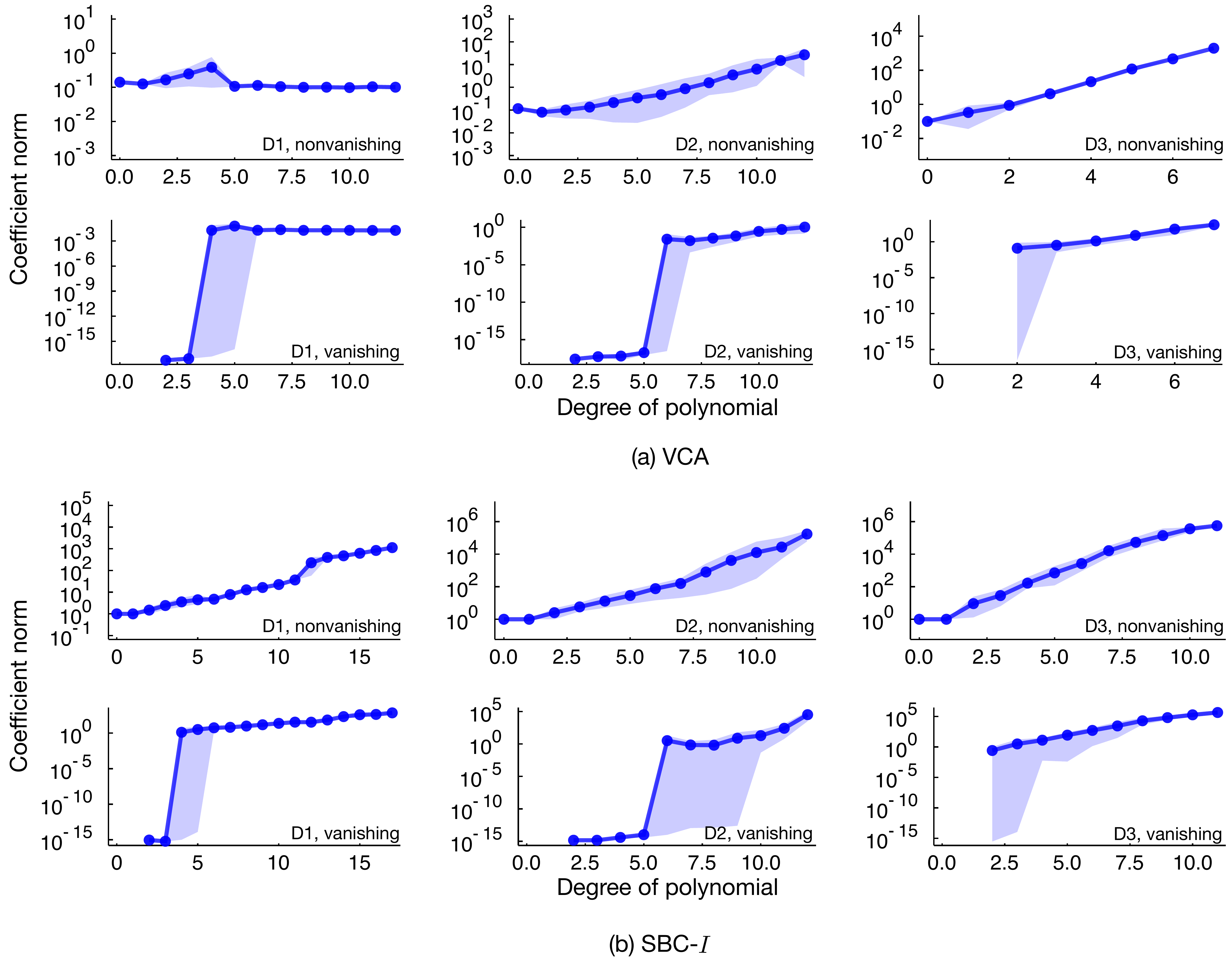}\caption{Coefficient norm of nonvanishing polynomials (upper row of each panel) and vanishing
polynomials (lower row of each panel) by (a) VCA and (b) SBC-$I$ for three datasets (each column for
each dataset). The mean coefficient norms of each degree are linked by
solid lines. The range from the smallest to the largest coefficient norms is represented by shades. The coefficient norm is considerably different even at a degree, and the average coefficient norm increases sharply over degree.}
\label{fig:vca-coef}
\end{figure*}
\begin{figure*}[t]
\includegraphics[width=\linewidth]{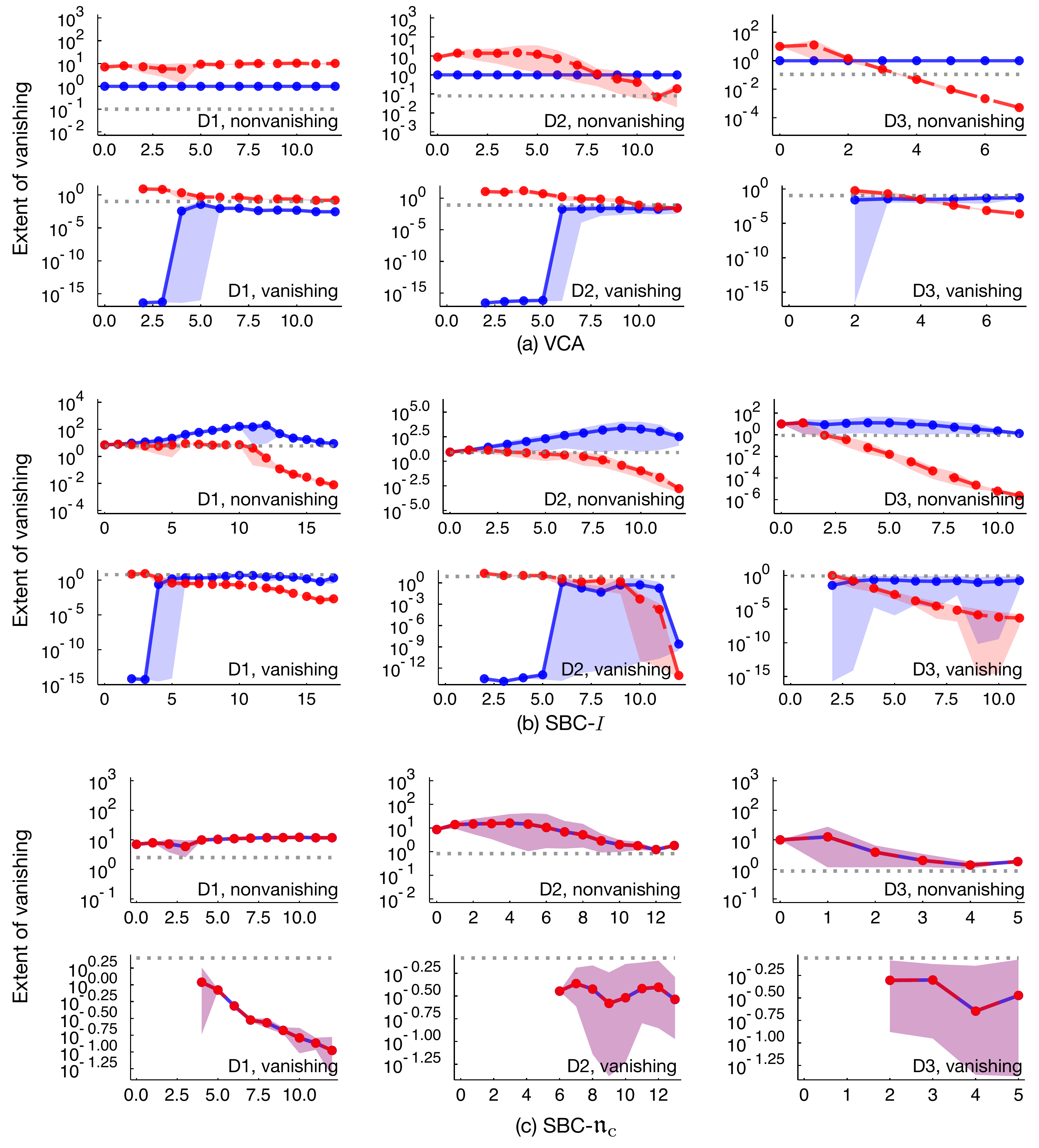}\caption{The extent of vanishing of polynomials obtained by VCA (a) and SBC-$I$ (b), and  SBC-$\mathfrak{n}_{\mathrm{c}}$ (c) for three datasets (each column
for each dataset). In each panel, the upper and lower rows show the result for nonvanishing and vanishing polynomials, respectively. The mean extent of vanishing at each degree is linked by dots and solid lines (blue), and the mean extent of vanishing of polynomials whose coefficient norm is normalized to unity by a post-processing is linked by dots and dashed lines (red). 
The range from the smallest to the largest extent of vanishing is represented by shades (red and blue). Dotted lines (gray) represent threshold $\epsilon$. (a, b) using VCA and SBC-$I$, some nonvanishing polynomials turn into vanishing polynomials when the coefficient vectors are normalized, whereas some vanishing polynomials turn into nonvanishing polynomials. (c) by taking the coefficient normalization into account (SBC-$\mathfrak{n}_{\mathrm{c}}$), the extent of vanishing remains invariant when coefficients are normalized because the coefficients
are already normalized during the basis construction. }
\label{fig:vanishingness}
\end{figure*}
\begin{table*}[t]
\caption{Statistics of basis sets of nonvanishing polynomials computed by SBC-$I$ and SBC-$\mathfrak{n}_{\mathrm{c}}$ with coefficient truncation thresholds $\theta=0.0,0.5,0.9,1.0$. The basis sets of degree up to ten are considered in the calculation of the statistics for fair comparison. One can see that (i) the nonvanishing polynomials obtained by SBC-$I$ have significantly large coefficient norms, while those found by SBC-$\mathfrak{n}_{\mathrm{c}}$ ($\theta=1.0$) have unit coefficient norms; (ii) even with the coefficient truncation ($\theta=0.0,0.5,0.9$), the coefficient norms are close to unity despite of the drastically shortened coefficient vectors; and (iii) at the same time, runtimes and memories are reduced by approximately 2--20 times.}\label{table:coefficient-approximation}
\centering
\small
\begin{tabular}{|c|c|c|c|c|c|c|}
\hline 
 &  & \multicolumn{4}{c|}{SBC-$\mathfrak{n}_{\mathrm{c}}$} & SBC-$I$\tabularnewline
\hline 
 & $\theta$ & 0.0 & 0.5 & 0.9 & 1.0 & \tabularnewline
\hline 
\hline 
\multirow{5}{*}{D2$^{+}$} & Length of coeff. vec. & 21 & 21 & 233 & 19427 & 19448\tabularnewline
\cline{2-7} 
 & Mean coeff. norm & 2.11 & 2.11 & 3.00 & 1.00 & 1.19e+4\tabularnewline
\cline{2-7} 
 & Min / Max coeff. norm & 0.60 / 4.46 & 0.60 / 4.46 & 1.00 / 6.21 & 1.00 / 1.00 & 1.00 / 2.33e+5\tabularnewline
\cline{2-7} 
 & Runtime {[}msec{]} & 1.50e+1 & 1.52e+1 & 1.62e+2 & 4.58e+1 & 1.57\tabularnewline
\cline{2-7} 
 & Memory {[}MB{]} & 2.46e+1 & 2.46e+1 & 2.72e+1 & 8.18e+1 & 2.42\tabularnewline
\hline 
\multirow{5}{*}{D3$^{+}$} & Length of coeff. vec. & 38 & 38 & 4207 & 646625 & 646646\tabularnewline
\cline{2-7} 
 & Mean coeff. norm & 1.79 & 1.79 & 2.12 & 1.00 & 1.08e+8\tabularnewline
\cline{2-7} 
 & Min / Max coeff. norm & 0.54 / 5.30 & 0.54 / 5.30 & 0.88 / 3.89 & 1.00 / 1.00 & 1.00 / 1.00e+10\tabularnewline
\cline{2-7} 
 & Runtime {[}msec{]} & 8.17e+2 & 8.31e+2 & 3.61e+3 & 1.79e+4 & 4.47e+1\tabularnewline
\cline{2-7} 
 & Memory {[}MB{]} & 1.10e+3 & 1.1e+3 & 4.01e+3 & 1.17e+4 & 3.54\tabularnewline
\hline 
\end{tabular}
\end{table*}

\begin{table}[t]
\caption{Classification results by VCA and SBC-$\mathfrak{n}_\mathrm{c}$ in three datasets (Iris, Vowel, and Vehicle). SBC-$\mathfrak{n}_{\mathrm{c}}$ achieved comparable or even lower errors than VCA with significantly shorter feature vectors, which implies VCA basis sets contain many spurious vanishing polynomials. The results are averaged over ten independent runs.}~\label{table:classification-result}
\centering
\begin{tabular}{|c|c|c|c|c|c|}
\hline 
 &  & \multicolumn{3}{c|}{SBC-$\mathfrak{n}_{\mathrm{c}}$} & VCA\tabularnewline
\hline 
 & $\theta$ & 0.5 & 0.9 & 1.0 & \tabularnewline
\hline 
\hline 
\multirow{4}{*}{Ir.} & Error & 0.03 & 0.05 & 0.06 & 0.05\tabularnewline
\cline{2-6} 
 & Length of $\mathcal{F}(\cdot)$ & 8.20e+1 & 9.29e+1 & 6.35e+1 & 1.51e+2\tabularnewline
\cline{2-6} 
 & Runtime {[}msec{]} & 1.01e+1 & 1.10e+1 & 8.66 & 6.90\tabularnewline
\cline{2-6} 
 & Memory {[}MB{]} & 5.61 & 6.50 & 4.91 & 4.74\tabularnewline
\hline 
\multirow{4}{*}{Vo.} & Error & 0.45 & 0.50 & 0.34 & 0.45\tabularnewline
\cline{2-6} 
 & Length of $\mathcal{F}(\cdot)$ & 3.29e+3 & 3.24e+3 & 3.12+3 & 4.74e+3\tabularnewline
\cline{2-6} 
 & Runtime {[}msec{]} & 5.47e+3 & 5.85e+3 & 4.89e+3 & 7.20e+3\tabularnewline
\cline{2-6} 
 & Memory {[}MB{]} & 5.24e+2 & 5.71e+2 & 6.88e+2 & 3.97e+2\tabularnewline
\hline 
\multirow{4}{*}{Ve.} & Error & 0.26 & 0.25 & 0.20 & 0.19\tabularnewline
\cline{2-6} 
 & Length of $\mathcal{F}(\cdot)$ & 5.57e+3 & 5.78e+3 & 5.25e+3 & 8.26e+3\tabularnewline
\cline{2-6} 
 & Runtime {[}msec{]} & 2.95e+4 & 3.89e+4 & 4.69e+4 & 1.31e+4\tabularnewline
\cline{2-6} 
 & Memory {[}MB{]} & 4.23e+3 & 4.24e+3 & 4.38e+3 & 1.01e+3\tabularnewline
\hline 
\end{tabular}
\end{table}

Here, we compare VCA, SBC without any normalization (i.e., SBC with \texttt{Step2}; SBC-$I$), and SBC with the coefficient normalization (SBC-$\mathfrak{n}_{\mathrm{c}}$). 
In the first experiment, we show that VCA and SBC-$I$ encounter severe spurious vanishing problem even in simple datasets, whereas SBC-$\mathfrak{n}_{\mathrm{c}}$ does not. 
In the second experiment, we compare these methods in classification tasks. All experiments were performed using Julia implementation on a desktop machine with an eight-core processor and a 32 GB memory. We emphasize that the proposed methods (coefficient normalization with the generalized eigenvalue problem and the coefficient truncation) can be easily unified with other basis construction methods because these methods are all based on the SBC framework. However, these methods are less commonly used than VCA, and they need more hyperparameters to control, which makes the analysis unnecessarily complicated.

\subsection{Analysis of Coefficient Norm and the Extent of Vanishing with Simple Datasets}\label{sub:Results1}
We perform basis construction by VCA, SBC-$I$, and SBC-$\mathfrak{n}_{\mathrm{c}}$. The coefficient norm and the extent of vanishing of obtained polynomials are respectively compared between three methods. We also compare SBC-$\mathfrak{n}_{\mathrm{c}}$ with and without the coefficient truncation. 

\paragraph*{Datasets and parameters}
We use three algebraic varieties: (D1) double concentric circles (radii
1 and 2), (D2) triple concentric ellipses (radii $(\sqrt{2}, 1/\sqrt{2})$, $(2\sqrt{2}, 2/\sqrt{2})$, and $(3\sqrt{2}, 3/\sqrt{2})$) with $3\pi/4$
rotation, and (D3) $\left\{ xz-y^{2},x^{3}-yz\right\} $. We randomly
sampled 50, 70, and 100 points from these algebraic varieties, respectively. 
We further consider two datasets by adding variables to D2 and D3. (D2$^+$) five additional variables $y_i = k_ix_1+(1-k_i)x_2$ for $k_i\in\{0.0, 0.2, 0.5, 0.8, 1.0\}$, where $x_1$ and $x_2$ are the variables of D2. (D3$^+$) nine additional variables $y_i = k_ix_1+l_ix_2+(1-k_i-l_i)x_3$ for $(k_i,l_i)\in \{0.2,0.5,0.8\}^2$, where $x_1$, $x_2$, and $x_3$ are the variables of D3. Each dataset  is mean-centralized and then perturbed by the additive Gaussian noise. The mean of the noise is set to zero, and the standard deviation is set to 5\% of the average absolute value of the points. 

For each dataset and method, the threshold $\epsilon$ is selected as follows. First, we compute a Gr\"obner basis $G$ of the algebraic variety of the dataset. Suppose $G$ contains $M_{t}, M_{t+1}, \cdots, M_{T}$ polynomials at degree $t, t+1, ..., T$, respectively, where $t$ and $T$ are the lowest degree and highest degree of polynomials in $G$, respectively. Then, $\epsilon$ is selected so that the target basis construction yields a basis $G^{\prime}$ whose lowest-degree polynomial is degree $t$, and $|G^{\prime}_{\tau}| \ge M_{\tau}$ for $\tau \ge t$. To be more precise, we first search the range of such thresholds, and set $\epsilon$ to the mean of that range.

\paragraph*{Results}
In Fig.~\ref{fig:vca-coef}, the coefficient norm of nonvanishing polynomials (upper row of each panel) and that of vanishing polynomials (bottom row of each panel), which are obtained by VCA and SBC-$I$, are plotted along the degree. 
The mean values are represented by solid lines and dots, and the range from minimum to maximum is represented by shades. As can be seen from the figure, the mean coefficient norm tends to sharply grow along the degree (note that the vertical axes are in the logarithm scale) for both methods. Even within a degree, there can be a huge gap as in degree-5 VCA vanishing polynomials of D1, degree-6 SBC-$I$ vanishing polynomials of D2 (bottom middle panel), and so on. 
These results imply that some vanishing (or nonvanishing) polynomials might be vanishing (or nonvanishing) merely due to their small (or large) coefficients; such polynomials might become nonvanishing (or vanishing) polynomials once these polynomials are normalized to have a unit coefficient norm. In fact, this is corroborated by the result shown in Fig.~\ref{fig:vanishingness}(a,b). The extent of vanishing (blue dots and solid lines) is contrasted against the rescaled extent of vanishing (red dots and dashed lines), which is calculated by rescaling the extent of vanishing using the coefficient norm of polynomials at post-processing so that polynomials have a unit coefficient norm. 
After the rescaling, some nonvanishing polynomials show the extent of vanishing below the threshold (gray dotted line) and some vanishing polynomials show the extent of vanishing above the threshold. For example, degree-$5$ VCA vanishing polynomials become nonvanishing polynomials after the rescaling; degree-10 SBC-$I$ nonvanishing polynomials become vanishing polynomials after the rescaling. The variance of the extent of vanishing at each degree also changes drastically. For example, the rescaling degree-5 VCA vanishing polynomials show large variance of the extent of vanishing, but the rescaling reveals that the actual extent of vanishing is almost identical to these polynomials. The reverse is also observed as in degree-5 VCA nonvanishing polynomials.
Note that both VCA and SBC-$I$ required expensive calculations for this post-processing (rescaling), because usually, these methods cannot access the coefficient norm of polynomials (especially in the numerical implementation).

In contrast, as shown in Fig.~\ref{fig:vanishingness}(c), the extent of vanishing of polynomials from SBC-$\mathfrak{n}_{\mathrm{c}}$ are consistent before and after the normalization, which is simply because the polynomials are generated under the coefficient normalization. Moreover, we can see that under coefficient normalization, the extent of vanishing shows considerably lower variance for both nonvanishing and vanishing polynomials. In other words, VCA and SBC-$I$ overestimate (or underestimate) the extent of vanishing due to the bloat in the coefficient norm.

Next, we evaluate SBC-$\mathfrak{n}_{\mathrm{c}}$ with the coefficient truncation. The result is summarized in Table~\ref{table:coefficient-approximation}. We change the truncation threshold $\theta$ in~(\ref{eq:truncation-threshold}) from 0.0 to 1.0. Following Theorem~\ref{thm:truncation-limit}, we keep at least $|F_t|$ coefficients at each degree regardless of $\theta$. Thus, $\theta=0.0$ corresponds to the case where we keep exactly $|F_t|$ coefficients for each degree. $\theta=1.0$ corresponds to SBC-$\mathfrak{n}_{\mathrm{c}}$ without the coefficient truncation.
Here, we analyze the nonvanishing polynomials in terms of the length of coefficient vectors, the actual coefficient norm (mean, minimum, and maximum), the runtime of basis construction, and the memory used during the basis construction. To measure these statistics consistently across methods and parameters, the basis construction is terminated at degree 10 even if the termination condition is not satisfied.
We also show the same statistics of SBC-$I$. As for VCA, we cannot find proper parameter $\epsilon$ so that the degree and number of basis polynomials are similar to the Gr\"obner basis. VCA rescales each nonvanishing polynomial by the norm of its evaluation vector during the basis construction. We consider that this rescaling can lead to more spurious vanishing polynomials, resulting in too early termination, or lead to more spurious nonvanishing polynomials, resulting in fewer vanishing polynomials than those of the Gr\"obner basis at each degree. Because the computation of VCA and SBC-$I$ is quite similar, it is enough only to consider SBC-$I$ for measuring runtime and memory.

As can be seen in Table~\ref{table:coefficient-approximation},
with $\theta=0.9$, the truncated coefficient vectors are approximately 100 times shorter. Nevertheless, the mean, minimum, and maximum of the coefficient norm are still moderately close to unity, respectively, for both datasets. This means that only about 1\% of monomials and coefficients have a significant contribution to the basis polynomials. Even in the extreme case ($\theta=0.0$), the coefficient norm of polynomials still lies in the moderate range, while the coefficient vectors are significantly shortened (less than 0.1\%). 
By the coefficient truncation, the runtime and memory for SBC-$\mathfrak{n}_{\mathrm{c}}$ is reduced. For example, at $\theta=0.9$, the runtime and memory of SBC-$\mathfrak{n}_{\mathrm{c}}$ is reduced by around 3 for both datasets; at $\theta=0.5$, the runtime is reduced by 20 times for D3$^{+}$.
SBC-$I$ remains faster than SBC-$\mathfrak{n}_{\mathrm{c}}$ even with $\theta=0.0$. However, the coefficient norm of SBC-$I$ significantly varies across polynomials (e.g., $10^{10}$ gap between minimum and maximum for D3$^+$). In other words, the fast calculation of SBC-$I$ is a consequence of allowing the basis construction to encounter the spurious vanishing problem.

Again, note that coefficient vectors are typically not accessible for VCA and SBC-$I$ in the numerical implementation. Thus, one cannot normalize nor discard polynomials by weighing their coefficient norms, as done in the analysis. For the above analysis, we calculated the coefficient vectors for VCA and SBC-$I$ in the same manner as in SBC-$\mathfrak{n}_{\mathrm{c}}$, which takes the additional cost. The runtime was measured by independently running VCA and SBC-$I$ without the coefficient calculation. 

\subsection{Classification\label{sec:classification}}
Here, we extract feature vectors from data using vanishing polynomials, and train a linear classifier with these vectors. We compare VCA and SBC-$\mathfrak{n}_{\mathrm{c}}$ with and without the coefficient truncation. In the training stage, we compute vanishing polynomials for each class. Let $G_{i}=\{g^{(i)}_{1},...,g^{(i)}_{|G_{i}|}\}$ be the vanishing polynomials of the $i$-th class data. 
As proposed in~\cite{livni2013vanishing}, a feature vector of a data point $\boldsymbol{x}$ is given by
\begin{align*}
\mathcal{F}(\boldsymbol{x}) & =\Bigl(\cdots,\underbrace{\left|g^{(i)}_{1}(\boldsymbol{x})\right|,\cdots,\left|g^{(i)}_{|G_{i}|}(\boldsymbol{x})\right|}_{|G_{i}(\boldsymbol{x})|^{\top}},\cdots\Bigr)^{\top},
\end{align*}
where $g_j^{(i)}$ is the $j$-th vanishing polynomials of the $i$-th class.
Intuitively, $\mathcal{F}(\boldsymbol{x}^{(i)})$ for the $i$-th class data point $\boldsymbol{x}^{(i)}$ takes small values for $G_i$ part and large values for the rest. Note that these feature vectors do not exploit class-discriminative information because the basis set for each class is independently constructed by only using data of the corresponding class.
We employed $\ell_2$-regularized logistic regression and one-versus-the-rest strategy using LIBLINEAR~\cite{rong09liblinear}. 

\paragraph*{Datasets and paramters}
We used three datasets (Iris, Vowel, and Vehicle) from the UCI dataset repository~\cite{Lichman2013machine}. The parameter $\epsilon$ was selected by 3-fold cross-validation. Because Iris and Vehicle do not have prespecified training sets and test sets, we randomly split each dataset into a training set (60\%) and test set (40\%), which were mean-centralized and normalized so that the mean norm of data points is equal to unity.

\paragraph*{Results}
As can be seen from Table~\ref{table:classification-result}, SBC-$\mathfrak{n}_{\mathrm{c}}$ achieves comparable or lower classification error than VCA for all the datasets. In particular, the improvement at the Vowel dataset is significant. Note that SBC-$\mathfrak{n}_{\mathrm{c}}$ yields much shorter feature vectors than VCA. For example, in the Vehicle dataset, SBC-$\mathfrak{n}_{\mathrm{c}}$ feature vectors are shorter than the VCA feature vectors by approximately 3,000 (about 36\% reduction). Furthermore, in contrast to the previous experiment using D2$^+$ and D3$^+$, the gap of runtimes between SBC-$\mathfrak{n}_{\mathrm{c}}$ and VCA is much less significant. This is because the termination degree of the basis construction is rather low (approximately five, in most cases). In such a case, the length of coefficient vectors is not the bottleneck of the runtime. This also explains why the coefficient truncation not necessarily decreases the runtime of SBC-$\mathfrak{n}_{\mathrm{c}}$. It also can be seen that the effect of the coefficient truncation on the classification error is not consistent. The classification error decreases for the Iris dataset and increases for the other datasets. To pursue better performance in classification, it is necessary to consider class-discriminative information in basis construction and coefficient truncation. For example, introducing coefficient normalization in discriminative VCA (DVCA;~\cite{hou2016discriminative}) is an interesting future work. 

\section{Conclusion and Future Work}
In this paper, we discussed the spurious vanishing problem in the approximate vanishing ideal, which has been an unnoticed theoretical flaw of existing polynomial-based basis constructions. To circumvent the spurious vanishing problem, polynomial-based basis constructions are required to introduce a normalization. We propose a method to optimally generate basis polynomials under a given normalization. The proposed method is enough general to extend the existing basis construction algorithms and to consider various types of normalization. In particular, we consider coefficient normalization, which is intuitive but costly to introduce. We propose two methods to ease the computational cost; one is an exact method that takes advantages of the iterative nature of the basis construction framework, and the other is an approximation method, which empirically but drastically shortens the coefficient vectors while keeping the coefficient norm of the polynomials in a moderate range.

The experiments show the severity of the spurious vanishing problem in basis construction algorithms without proper normalization (VCA and SBC-$I$) and the effectiveness of the proposed method for avoiding the problem. In the classification tasks, SBC with coefficient normalization achieved comparable or even lower classification errors with much shorter feature vectors than unnormalized methods.
An important future direction is to design a more scalable algorithm. Our experiments suggest that the coefficient norm of polynomials is well regularized even when only a few proportions of monomials are considered. This can be a key observation to reduce the runtime of new algorithms. Another interesting direction is to consider a different type of normalization.

\section*{Acknowledgement}
This work was supported by JSPS KAKENHI Grant Number 17J07510.

\FloatBarrier
% \bibliographystyle{IEEEtran}
% \bibliography{main}
% Generated by IEEEtran.bst, version: 1.14 (2015/08/26)

\end{document}